\newcolumntype{L}{>{\raggedright\arraybackslash}X} 
\newcolumntype{C}{>{\centering\arraybackslash}X} 
\newcolumntype{R}{>{\raggedleft\arraybackslash}X} 
\lstdefinestyle{mystyle}{
  backgroundcolor=\color{backcolour},   commentstyle=\color{codegreen},
  keywordstyle=\color{magenta},
  numberstyle=\tiny\color{codegray},
  stringstyle=\color{codepurple},
  basicstyle=\ttfamily\footnotesize,
  commentstyle=\color{red!10!green!70}\textit,
  breakatwhitespace=false,         
  breaklines=true,                 
  captionpos=b,                    
  keepspaces=true,                 
  numbers=left,                    
  numbersep=5pt,                  
  showspaces=false,                
  showstringspaces=false,
  showtabs=false,                  
  tabsize=2
}
\definecolor{gray}{gray}{0.9}
\definecolor{tgray}{gray}{0.5}
\DeclareRobustCommand\onedot{\futurelet\@let@token\@onedot}
\def\@onedot{\ifx\@let@token.\else.\null\fi\xspace}
\def\ie{\emph{i.e}\onedot}
\newcommand{\E}{\mathbb{E}}
\newcommand{\R}{\mathbb{R}}
\newcommand{\mM}{\mathcal{M}}
\newcommand{\nm}{non-Markovian}
\newcommand{\defeq}{\mathrel{\mathop:}=}
\newtheorem{theorem}{Theorem}
\newtheorem{lemma}{Lemma}
\renewcommand{\paragraph}{%
  \@startsection{paragraph}{4}%
  {\z@}{0ex \@plus 0ex \@minus 0ex}{-1em}%
  {\hskip\parindent\normalfont\normalsize\bfseries}%
}
\newcommand\blfootnote[1]{%
  \begingroup
  \renewcommand\thefootnote{}\footnote{#1}%
  \addtocounter{footnote}{-1}%
  \endgroup
}
\crefname{algocf}{alg.}{algs.}
\Crefname{algocf}{Algrithm}{Algrithm}
\definecolor{gblue}{HTML}{4285F4}
\definecolor{gred}{HTML}{DB4437}
\acrodef{lebm}[LEBM]{Latent-space Energy-Based Model}
\acrodef{mle}[MLE]{Maximum Likelihood Estimation}
\acrodef{mcmc}[MCMC]{Markov Chain Monte Carlo}
\acrodef{mdp}[MDP]{Markov Decision Process}
\acrodef{nmdp}[nMDP]{non-Markov Decision Process}
\acrodef{pomdp}[POMDP]{Partially Observable Markov Decision Process}
\acrodef{irl}[IRL]{Inverse Reinforcement Learning}
\acrodef{mcts}[MCTS]{Monte Carlo Tree Search}
\acrodef{td}[TD]{Temporal Difference}
\acrodef{bco}[BCO]{Behavior Cloning from Observations}
\acrodef{ilfo}[ILfO]{Imitation Learning from Observations}
\acrodef{vi}[VI]{Variational Inference}
\acrodef{sac}[SAC]{soft Actor-Critic}
\acrodef{lanmdp}[LanMDP]{Latent-action non-Markov Decision Process}
\newcolumntype{C}{>{\columncolor{gray}}c}
\newcolumntype{L}{>{\columncolor{gray}}l}
\title{Learning non-Markovian Decision-Making \\ from State-only Sequences}
\author{%
    \bf Aoyang Qin$^{\star,1, 2}$~~~Feng Gao$^{3}$~~~Qing Li$^{2}$~~~Song-Chun Zhu$^{1,2,4}$~~~Sirui Xie$^{\star,5}$ \\
    \phantomsection\\
    $^1$ Department of Automation, Tsinghua University\\
    $^2$ Beijing Institute for General Artificial Intelligence (BIGAI)\\
    $^3$ Department of Statistics, UCLA $^4$ School of Artificial Intelligence, Peking University \\ $^5$ Department of Computer Science, UCLA
    \vspace{-1.5em}
}
\begin{document}

\maketitle

\begin{abstract}
  Conventional imitation learning assumes access to the actions of demonstrators, but these motor signals are often non-observable in naturalistic settings. Additionally, sequential decision-making behaviors in these settings can deviate from the assumptions of a standard Markov Decision Process (MDP). To address these challenges, we explore deep generative modeling of state-only sequences with non-Markov Decision Process (nMDP), where the policy is an energy-based prior in the latent space of the state transition generator. We develop maximum likelihood estimation to learn both the transition and the policy, which involves short-run MCMC sampling from the prior and importance sampling for the posterior. The learned model enables \textit{decision-making as inference}: model-free policy execution is equivalent to prior sampling, model-based planning is posterior sampling initialized from the policy. We demonstrate the efficacy of the proposed method in a prototypical path planning task with non-Markovian constraints and show that the learned model exhibits strong performances in challenging domains from the MuJoCo suite. 
\end{abstract}

\blfootnote{$^\star$ indicates equal contribution. Correspondence: Sirui Xie (srxie@ucla.edu). Code and data are available at \href{https://github.com/qayqaq/LanMDP}{https://github.com/qayqaq/LanMDP}}

\section{Introduction}\label{sec:intro}
Imitation from others is a prevalent phenomenon in humans and many other species, where individuals learn by observing and mimicking the actions of others. An intriguing aspect of this process is the brain's ability to extract motor signals from sensory input. This remarkable capability is facilitated by \emph{mirror neurons}~\citep{di1992understanding,rizzolatti2001neurophysiological}, which respond to observations as if the imitator is performing the actions themselves. In conventional imitation learning~\citep{hayes1994robot,ziebart2008maximum} and offline reinforcement learning~\citep{levine2020offline}, action labels have served as proxies for mirror neurons. But it is important to recognize that they are actually productions of human interventions. Given the recent advancements in AI, now is probably an opportune time to explore imitation learning in a more naturalistic setting. 

While the setting of state-only demonstrations is not common, there are certain exceptions. For example, \ac{irl} initially formulated the problem as state visitation matching~\citep{ng2000algorithms}, where demonstrations consist solely of state sequences. Subsequently, this state-only setting was rebranded as \ac{ilfo}, which introduced the generalized formulation of matching marginal state distributions~\citep{torabi2018generative,sun2019provably}. These methods typically rely on the Markov assumption and \ac{td} learning techniques~\citep{sutton2018reinforcement}. One consequence of this assumption, previously believed to be advantageous, is that sequences with different state orders are treated as equivalent. However, the success of general sequence modeling~\citep{brown2020language} has challenged this belief, leading to deep reflections. Notable progresses since then include an analysis of the expressivity of Markovian rewards \citep{abel2021expressivity} and a series of sequence models tailored for decision-making problems~\citep{janner2021offline,chen2021decision,janner2022planning,ajay2023conditional}. Aligning with this evolving trend, we extend the state-only imitation learning problem to encompass \nm{} domains.

In this work, we propose a generative model based on \ac{nmdp}, in which states are fully observable and actions are latent. Unlike existing monolithic sequence models, we factorize the joint state-action distribution into policy and causal transition according to the standard \ac{mdp}. To further extend to the \nm{} domain, we condition the policy on sequential contexts.
The density families of policy and transition are consistent with conventional \ac{irl}~\citep{ziebart2008maximum}.  We refer to this model as \ac{lanmdp}.
Because the actions are latent variables following Boltzmann distribution, the present model is closely related to the \ac{lebm}~\citep{pang2020learning}. To learn the latent policy by \ac{mle}, we need to sample from the prior and the posterior. We sample the prior using short-run \ac{mcmc}~\citep{nijkamp2019learning}, and the posterior using importance sampling. Specifically, the proposed importance sampling sidesteps back-propagation through time in posterior \ac{mcmc} with a single-step lookahead of the Markov transition. The transition is learned from self-interaction.

Once the \ac{lanmdp} is learned, it can be used for policy execution and planning through prior and posterior sampling, or in other words, \textit{policy as prior, planning as posterior inference}~\citep{attias2003planning,botvinick2012planning}. In our analysis, we derive an objective of the \nm{} decision-making problem induced from the \ac{mle}. We show that the prior sampling at each step can indeed lead to optimal expected returns. Almost surprisingly, we find that the entire family of maximum entropy reinforcement learning~\citep{ziebart2008maximum, ziebart2010modeling, fox2016taming, haarnoja2017reinforcement, haarnoja2018soft, levine2018reinforcement} naturally emerges from the algebraic structures in the \ac{mle} of latent policies. This formulation avoids the peculiarities of maximizing state transition entropy in prior arts~\citep{ziebart2010modeling,levine2018reinforcement}. We also show that when a target goal state is in-distribution, the posterior sampling is optimizing a conditional variant of the objective, realizing model-based planning. 
In our experiments, we validate the necessity and efficacy of our model in learning to sequentially plan cubic curves, and illustrate an \textit{over-imitation} phenomenon~\citep{horner2005causal,lyons2007hidden} when the learned model is repurposed for goal-reaching. We also test the proposed modeling, learning, and computing method in MuJoCo, a domain with higher-dimensional state and action spaces, and achieve performance competitive to existing methods, even those that learn with action labels.

\section{Non-Markov Decision Process}\label{sec:nmdp}
\begin{figure}[t]\label{fig 1}
    \centering
    \begin{subfigure}[b]{0.35\textwidth}
        \includegraphics[width=\textwidth]{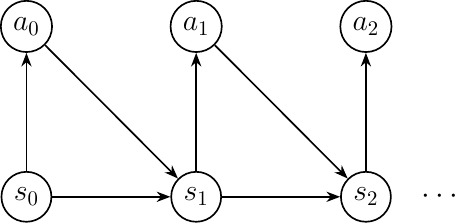}
        \caption{Policy and transition in MDP}
        \label{fig:sub1}
    \end{subfigure}
    \hspace{2cm}
    \begin{subfigure}[b]{0.35\textwidth}
        \includegraphics[width=\textwidth]{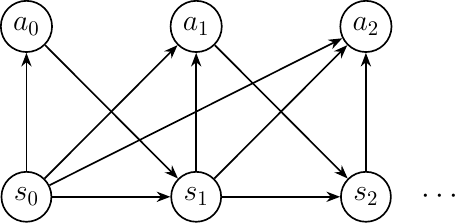}
        \caption{Policy and transition in nMDP}
        \label{fig:sub2}
    \end{subfigure}
    \caption{\textbf{Graphical model of policy and transition} in standard Markov Decision Process and non-Markov Decision Process. Reward variables are omitted in the probabilistic graph to emphasize the difference in dependency between actions and states. nMDP is a natural generalization of standard MDP. }
    \label{fig:both}
    \label{Fig 1}
    \vspace{-10pt}
\end{figure}

The most well-known sequence model of a decision-making process is Markov Decision Process. A \ac{mdp} is a tuple $\mM = \langle S, A, T, R, \rho, H\rangle$ that contains a set $S$ of states, a set $A$ of actions, a transition $T: S\times A\mapsto \Pi(S)$ that returns for every state $s_t$ and action $a_t$ a distribution over the next state $s_{t+1}$; a reward function $R: S\times A\mapsto\R$ that specifies the real-valued reward received by the agent when taking action $a_t$ in state $s_t$; an initial state distribution $\rho: \Pi(S)$; and a horizon $H$ that is the maximum number of actions/steps the agent can execute in one episode. A solution to an MDP is a policy that maps states to actions, $\pi: S \mapsto\Pi(A)$. The value of policy $\pi$, $V^{\pi}(s) = \E_{T, \pi}[\sum_{t=0}^H R(s_t) | s_0=s]$ is the expected cumulative reward (\ie return) when executing with this policy starting from state $s$. The state-action value of policy $\pi$ is $Q^{\pi}(s_t, a_t) = R(s_t, a_t)+\E_{T(s_{t+1}|s_t, a_t)}[V^{\pi}(s_{t+1})]$. The optimal policy $\pi^{*}$ can maximize either $E_{\rho(s_0)}[V^{\pi}(s_0)]$, or the same objective plus the policy entropy~\citep{todorov2006linearly,ziebart2008maximum,haarnoja2017reinforcement}. The Markovian assumption supports the convergence of a series of TD-learning methods~\citep{sutton2018reinforcement}, whose reliability in non-Markovian domains is still an open problem.  

A non-Markov Decision Process is also a tuple $\mM = \langle S, A, T, R, \rho, H\rangle$. It generalizes \ac{mdp} by allowing for \nm{} transitions and rewards~\citep{brafman2019regular}. Notably, assuming Markovian transition and \nm{} reward is usually sufficient since a state space with \nm{} transition can be represented with its Markov abstraction~\citep{ronca2022markov}. Markov abstraction can be done either by treating the original space as observations generated from the latent belief state in a \ac{pomdp}~\citep{kaelbling1998planning}, or by projecting historic contexts into an embedding space for sequence pattern detection~\citep{hutter2009feature,icarte2019learning,brafman2019regular}. Presumably, it is statistically more interesting in deep learning to focus our attention on \nm{} domains where the temporal dependencies in transition and reward differ. Therefore, without loss of generality, we assume that the state transition is Markovian $T: S\times A\mapsto \Pi(S)$, while the reward is not~\citep{icarte2018using,abel2021expressivity}, \ie $R: S^{+}\mapsto\R$, with $S^{+}$ denotes the set of all finite non-empty state sequences with length smaller than $H$. Obviously, the policy should also be \nm{} $\pi: S^{+}\mapsto\Pi(A)$. Check Figure \ref{Fig 1} for a probabilistic graphical model of the generation process of state sequences from a policy. 

\section{Learning and Sampling}\label{sec:method}
\subsection{Latent-action nMDP}\label{sec:gen_model}
 A complete trajectory is denoted by 
\begin{equation}\label{eq:traj}
    \zeta=\{s_0,a_0,s_1,a_1,\cdots,a_{T-1},s_T\}, 
\end{equation}
where $T$ is the maximum length of all observed trajectories and $T\leq H$. The joint distribution of state and action sequences can be factorized according to the causal assumptions in \ac{nmdp}: 
\begin{equation}\label{eq:nmdp}
    \begin{aligned}
    p_\theta(\zeta) &=  p(s_0)p_{\alpha}(a_0|s_0)p_{\beta}(s_1|s_0,a_0)\cdots p_{\alpha}(a_{T-1}|s_{0:T-1})p_{\beta}(s_T|s_{T-1},a_{T-1})\\
    &=p(s_0)\prod\nolimits_{t=0}^{T-1}p_{\alpha}(a_t|s_{0:t})p_{\beta}(s_{t+1}|s_t,a_t),
    \end{aligned}
\end{equation}
where $p_{\alpha}(a_{t}|s_{0:t-1})$ is the policy model with parameter $\alpha$, $p_{\beta}(s_t|s_{t-1},a_{t-1})$ is the transition model with parameter $\beta$, both of which are parameterized with neural networks, $\theta=(\alpha,\beta)$. $p(s_0)$ is the initial state distribution, which can be sampled as a black box. 

The density families of policy and transition are consistent with the conventional setting of \ac{irl}~\citep{ziebart2008maximum}, where the transition describes the predictable change in state as a single-mode Gaussian, $s_{t+1}\sim\mathcal{N}(g_\beta(s_t,a_t),\sigma^2)$, and the policy accounts for bounded rationality as a Boltzmann distribution with state-action value as the unnormalized energy:
\begin{equation}\label{eq:lebm}
    p_{\alpha}(a_{t}|s_{0:t})=\frac{1}{Z(\alpha, s_{0:t})}\exp{(f_{\alpha}(a_t;s_{0:t}))},
\end{equation}
where $f_{\alpha}(a_t;s_{0:t})$ is the negative energy, $Z(\alpha, s_{0:t})=\int\displaystyle \exp(f_{\alpha}(a_t;s_{0:t}))da_t$ is the normalizing constant given the contexts $s_{0:t}$. We discuss a general push-forward transition in Appx A.3.

Since we can only observe state sequences, the aforementioned generative model can be understood as a sequential variant of LEBM~\citep{pang2020learning}, where the transition serves as the generator and the policy is a history-conditioned latent prior. The marginal distribution of state sequences and the posterior distribution of action sequences are:
\begin{equation}\label{eq:prior_post}
        p_{\theta}(s_{0:T}) = \int p_\theta(s_{0:T},a_{0:T-1}) da_{0:T-1},\quad
        p_{\theta}(a_{0:T-1}|s_{0:T}) =\frac{p_\theta(s_{0:T},a_{0:T-1})}{p_{\theta}(s_{0:T}) }.  
\end{equation}

\subsection{Maximum likelihood learning}\label{sec:mle}

We need to estimate $\theta=(\alpha,\beta)$. Suppose we observe \emph{offline} training examples: $\{\xi^i\},i=1,2,\cdots,n,\quad \xi^i=[s^i_{0}, s^i_{1},... , s^i_{T}]$. The log-likelihood function is:
\begin{equation}\label{eq:mle}
    L_{off}(\theta) = \sum\nolimits^n_{i=1}\log p_\theta(\xi^i).
\end{equation}
Denote posterior distribution of action sequence $p_\theta(a_{0:T-1}|s_{0:T})$ as $p_\theta(A|S)$ for convenience where $A$ and $S$ means the complete action and state sequences in a trajectory. The full derivation of the learning method can be found in Appx A.2, which results in the following gradient:
\begin{equation}\label{eq:mle_joint}
        \nabla_\theta \log p_\theta(\xi) = \mathbb{E}_{p_\theta(A|S)}[\sum\nolimits^{T-1}_{t=0}(\underbrace{\nabla_\alpha\log p_{\alpha}(a_t|s_{0:t})}_{\text{policy/prior}}, \underbrace{\nabla_\beta\log p_{\beta}(s_{t+1}|s_{t},a_{t})}_{\text{transition}})].
\end{equation}
Due to the normalizing constant $Z(\alpha, s_{0:t})$ in the energy-based prior $p_\alpha$, the gradient for the policy term involves both posterior and prior samples:
\begin{equation}\label{eq:mle_policy}
    \delta_{\alpha, t}(S) = \mathbb{E}_{p_\theta(A|S)}\left[\nabla_{\alpha}\log p_{\alpha}(a_t|s_{0:t})\right] = \mathbb{E}_{p_\theta(A|S)}\left[\nabla_{\alpha} f_{\alpha}(a_t;s_{0:t})\right] - \mathbb{E}_{p_{\alpha}({a}_t| s_{0:t})}\left[\nabla_{\alpha} f_{\alpha}({a}_t; s_{0:t})\right],
\end{equation}
where $\delta_{\alpha, t}(S)$ denotes the expected gradient of policy term for time step $t$. Intuition can be gained from the perspective of adversarial training~\citep{finn2016connection,ho2016generative}: On one hand, the model utilizes action samples from the posterior $p_\theta(A|S)$ as pseudo-labels to supervise the unnormalized prior at each step. On the other hand, it discourages action samples directly sampled from the prior. The model converges when prior samples and posterior samples are indistinguishable. 

To ensure the transition model's validity, it needs to be grounded in real-world dynamics $Tr$ when jointly learned with the policy. Otherwise, the agent would be purely hallucinating based on the demonstrations. Throughout the training process, we allow the agent to periodically collect \emph{on-policy} data $\{(s_{t}^i, a_t^i, s_{t+1}^i)\}$, $i=1,2,\cdots,m$, $t=1,2,\cdots,T$ with $p_{\alpha}({a}_t|{s}_{0:t})$ and update the transition with a \emph{composite likelihood}~\citep{varin2011overview}
\begin{equation}\label{eq:mle_transition}
L_{comp}(\beta) = L_{off}(\theta)+L_{on}(\beta), \quad 
L_{on}(\beta) = \sum\nolimits_{i=1}^{m} \sum\nolimits_{t=1}^{T} \log p_\beta (s_{t+1}^i | s_{t}^i, a_t^i).
\end{equation}

\subsection{Prior and posterior sampling}\label{sec:sampling}
The maximum likelihood estimation requires samples from the prior and the posterior distributions of actions. It would not be a problem if the action space is quantized. However, since we target general latent action learning, we proceed to introduce sampling techniques for continuous actions. 

When sampling from a continuous energy space, short-run Langevin dynamics~\citep{nijkamp2019learning} can be an efficient choice. For a target distribution $\pi(a)$, Langevin dynamics iterates $a_{k+1}=a_k+s\nabla_{a_k}\log{\pi(a_k)}+\sqrt{2s}\epsilon_k$, where $k$ indexes the number of iteration, $s$ is a small step size, and $\epsilon_k$ is the Gaussian white noise. $\pi(a)$ can be either the prior $p_{\alpha}({a}_t| s_{0:t})$ or the posterior $p_\theta(A|S)$. One property of Langevin dynamics that is particularly amenable for EBM is that we can get rid of the normalizing constant. So for each $t$ the iterative update for prior samples is
\begin{equation}\label{eq:prior_ld}
    {a}_{t, k+1}= a_{t,k}+s\nabla_{ a_{t,k}} f_{\alpha}( a_{t, k}; s_{0:t})+\sqrt{2s}\epsilon_k.
\end{equation}

Given a state sequence $ s_{0:T}$ from the demonstrations, the posterior samples at each time step $a_{t}$ come from the conditional distribution $p(a_t| s_{0:T})$. Notice that with Markov transition, we can derive 
\begin{equation}\label{eq:post_fact}
    p_{\theta}(a_{0:T-1}|s_{0:T}) =\prod\nolimits^{T-1}_{t=0}p_\theta(a_t|s_{0:T})=\prod\nolimits^{T-1}_{t=0}p_\theta(a_t|s_{0:t+1}).
\end{equation}
\cref{eq:post_fact} reveals that given the previous and the next subsequent state, the posterior can be sampled at each step independently. So the posterior iterative update is 
\begin{equation}\label{eq:post_ld}
    {a}_{t, k+1}=a_{t,k}+s\nabla_{a_{t,k}} ( \underbrace{\log p_{\alpha}(a_{t,k}| s_{0:t})}_{\text{policy/prior}} + \underbrace{\log p_{\beta}( s_{t+1}| s_{t},a_{t, k})}_{\text{transition}})+\sqrt{2s}\epsilon_k.
\end{equation}
Intuitively, action samples at each step are updated by back-propagation from its prior energy and a single-step lookahead. While gradients from the transition term are analogous to the inverse dynamics in \ac{bco}~\citep{torabi2018behavioral}, it may lead to poor training performance due to non-injectiveness in forward dynamics~\citep{zhu2020off}. 

We develop an alternative posterior sampling method with importance sampling to overcome this challenge. Leveraging the learned transition, we have 
\begin{equation}\label{eq:importance}
    p_\theta(a_t| s_{0:t+1}) = \frac{p_\beta( s_{t+1}| s_{t},a_t)}{\mathbb{E}_{p_\alpha( a_t| s_{0:t})}\left[p_\beta( s_{t+1}| s_{t}, a_t)\right]}p_\alpha(a_t| s_{0:t}).
\end{equation}
Let $c(a_t;  s_{0:t+1})={\mathbb{E}_{p_\alpha( a_t| s_{0:t})}\left[p_\beta( s_{t+1}| s_{t}, a_t)\right]}$, posterior sampling from $p_{\theta}(a_{0:T-1}| s_{0:T})$ can be realized by adjusting importance weights of independent samples from the prior $p_\alpha(a_t| s_{0:t})$, in which the estimation of weights involves another prior sampling. In this way, we avoid back-propagating through non-injective dynamics and save some computation overhead. 

To train the policy, \cref{eq:mle_policy} can now be rewritten as
\begin{equation}\label{eq:mle_importance}
    \delta_{\alpha, t}(S) = \mathbb{E}_{p_\alpha(a_t| s_{0:t})}\left[\frac{p_\beta( s_{t+1}| s_{t},a_t)}{c(a_t;  s_{0:t+1})}\nabla_{\alpha} f_{\alpha}(a_t; s_{0:t})\right]-\mathbb{E}_{p_\alpha(a_t| s_{0:t})}\left[\nabla_{\alpha} f_{\alpha}(a_t; s_{0:t})\right].
\end{equation}

\section{Decision-making as Inference}\label{sec:analysis}
In Section~\ref{sec:method}, we present our method within the framework of probabilistic inference, providing a self-contained description. However, from a decision-making perspective, the learned policy may appear arbitrary. In this section, we establish a connection between probabilistic inference and decision-making, contributing a novel analysis that incorporates the latent action setting, the \nm{} assumption, and maximum likelihood learning. This analysis is inspired by, but distinct from, previous studies on the relationship between these two fields ~\citep{todorov2008general,ziebart2010modeling,toussaint2009robot,kappen2012optimal,levine2018reinforcement}.

\subsection{Policy execution with prior sampling}\label{sec:policy}
Let the ground-truth distribution of demonstrations be $p^\ast(s_{0:T})$, and the learned marginal distributions of state sequences be $p_{\theta}(s_{0:T})$.~\cref{eq:mle} in~\cref{sec:mle} is an empirical estimate of 
\begin{equation}\label{eq:mle_gt}
    \E_{p^{\ast}(s_{0:T})}[\log p_{\theta}(s_{0:T})] = \E_{p^{\ast}(s_{0})}\left[\log p^{\ast}(s_0)+\E_{p^{\ast}(s_{1:T}|s_0)}[\log p_{\theta}(s_{1:T}|s_{0})]\right].
\end{equation}
We can show that a sequential decision-making problem can be constructed to maximize the same objective. Our main result is summarized as Theorem 1.

\begin{theorem}\label{thm}
    Assuming the Markovian transition $p_{\beta^\ast}(s_{t+1}|s_{t}, a_t)$ is known, the ground-truth conditional state distribution $p^{\ast}(s_{t+1}|s_{0:t})$ for demonstration sequences is accessible, we can construct a sequential decision-making problem, based on a reward function $r_\alpha(s_{t+1},s_{0:t})\defeq\log \int p_{\alpha}(a_{t}|s_{0:t})p_{\beta^\ast}(s_{t+1}|s_{t}, a_t)da_t$ for an arbitrary energy-based policy $p_\alpha(a_t|s_{0:t})$. Its objective is 
    \begin{equation*}
        \sum\nolimits_{t=0}^{T}\E_{p^{\ast}(s_{0:t})}[V^{p_\alpha}(s_{0:t})]=\E_{p^\ast(s_{0:T})}\left[\sum\nolimits_{t=0}^{T}\sum\nolimits_{k=t}^{T}r_\alpha(s_{k+1};s_{0:k})\right], 
    \end{equation*}
    where $V^{p_\alpha}(s_{0:t})\defeq E_{p^\ast(s_{t+1:T}|s_{0:t})}[\sum\nolimits_{k=t}^{T}r_\alpha(s_{k+1};s_{0:k})]$ is the value function for $p_\alpha$. This objective yields the same optimal policy as the Maximum Likelihood Estimation $\E_{p^{\ast}(s_{0:T})}[\log p_{\theta}(s_{0:T})]$. 

    If we further define a reward function $r_\alpha(s_{t+1},a_t, s_{0:t})\defeq r_\alpha(s_{t+1},s_{0:t})+\log p_{\alpha}(a_{t}|s_{0:t})$ to construct a $Q$ function for $p_\alpha$
    \begin{equation*}
        Q^{p_\alpha}(a_t;s_{0:t})\defeq\E_{p^\ast(s_{t+1}|s_{0:t})}\left[r_\alpha(s_{t+1}, a_t,s_{0:t})+V^{p_\alpha}(s_{0:t+1})\right]. 
    \end{equation*}
    The expected return of $Q^{p_\alpha}(a_t;s_{0:t})$ forms an alternative objective
    \begin{equation*}
        \E_{p_{\alpha}(a_t|s_{0:t})}[Q^{p_\alpha}(a_t;s_{0:t})] = V^{p_\alpha}(s_{0:t})-\mathcal{H}_{\alpha}(a_t|s_{0:t})-\sum\nolimits_{k=t+1}^{T-1}\E_{p^{\ast}(s_{t+1:k}|s_{0:t})}[\mathcal{H}_{\alpha}(a_{k}|s_{0:k})]
    \end{equation*} 
    that yields the same optimal policy, for which the optimal $Q^{\ast}(a_t;s_{0:t})$ can be the energy function. 

    Only under certain conditions, this sequential decision-making problem is solvable through \nm{} extensions of the maximum entropy reinforcement learning algorithms. 
\end{theorem}
\begin{proof}
    See Appx B. 
\end{proof} 

The constructive proof above offers profound insights. By starting with the hypothesis of latent actions and \ac{mle}, and then considering known transition and accessible ground-truth conditional state distribution, we witness the \emph{automatic emergence} of the entire family of maximum entropy (inverse) RL. This includes prominent algorithms such as soft policy iteration~\citep{ziebart2010modeling}, soft Q learning~\citep{haarnoja2017reinforcement} and \ac{sac}~\citep{haarnoja2018soft}. Among them, \ac{sac} is the best-performing off-policy RL algorithm in practice. Unlike the formulation with joint state-action distribution~\citep{ziebart2010modeling, levine2018reinforcement}, our formulation avoids the peculiarities associated with maximizing state transition entropy. The choice of the maximum entropy policy aligns naturally with the objective of capturing uncertainty in latent actions, and it offers inherent advantages for exploration in model-free learning~\citep{sutton1990integrated,haarnoja2017reinforcement}. 

\subsection{Model-based planning with posterior sampling}\label{sec:planning}
Lastly, with the learned model, we can do posterior sampling given any complete or incomplete state sequences. The computation involved is analogous to model-based planning. In~\cref{sec:sampling}, we introduce posterior sampling with short-run \ac{mcmc} and importance sampling when we have the target next state, which generalizes all cases where the targets of immediate subsequent states are given. Here we introduce the complementary case, where the goal state $s_T$ is given as the target. 

The posterior of actions given the sequential context $s_{0:t}$ and a target goal state $s_T$ is
\begin{equation}\label{eq:planning}
    \begin{aligned}
    &p_\theta(a_{t:T}|s_{0:t}, s_{T})\propto p_\theta(a_{t:T}, s_{T}|s_{0:t})\\
    =&\int{\prod\nolimits_{k=0}^{T-t-1}\left[{p_\beta(s_{t+k+1}|a_{t+k}, s_{t+k})}{p_\alpha(a_{t+k}|s_{0:t+k})}\right]{p_\beta(s_{T}|a_{T-1}, s_{T-1})}}ds_{t+1:T-1},
    \end{aligned}
\end{equation}

in which all Gaussian expectation $\E_{p_\beta}[\cdot]$ can be approximated with the mean~\citep{kingma2013auto}. Therefore, $a_{t:T}$ can be sampled via short-run \ac{mcmc} with $\nabla_{a_{t:T}}\log p_\theta(a_{t:T}, s_{T}|s_{0:t})$ back propagated through time. The learned prior can be used to initialize these samples and facilitate the \ac{mcmc} mixing.

\section{Experiments}\label{sec:expr}
\subsection{Cubic curve planning}
To demonstrate the necessity of \nm{} value and test the efficacy of the proposed model, we designed a motivating experiment. Path planning is a prototypical decision-making problem, in which actions are taken in a 2D space, with the x-y coordinates as states. To simplify the problem without loss of generality, we can further assume $x_t$ to change with constant speed $h$, such that the action is $\Delta y_t$. Obviously, the transition model $(x_{t+1}, y_{t+1}) = (x_{t}+h, y_t+\Delta y_t)$ is Markovian. 

Path planning can have various objectives. Imagining you are a passenger of an autonomous driving vehicle. You would not only care about whether the vehicle reaches the goal without collision but also how comfortable you feel. To obtain comforting smoothness and curvature, consider $y$ is constrained to be a cubic polynomial $F(x)=ax^3+bx^2+cx+d$ of $x$, where $(a, b, c,d)$ are polynomial coefficients. Then the policy for this decision-making problem is \nm{}. 

To see that, suppose we are at $(x_t, y_t)$ at this moment, and the next state should be $(x_t+h, F(x_t+h))$. With Taylor expansion, we know $F(x_t+h)\approx F(x_t)+F'(x_t)h+\frac{F''({x_t})}{2!}h^2+\frac{F'''({x_t})}{3!}h^3$, so we can have a representation for the policy, $\pi(\Delta y_t|x_t, y_t)=F'(x_t)h+\frac{F''({x_t})}{2!}h^2+\frac{F'''({x_t})}{3!}h^3$. However, our representation of state only gives us $(x_t, y_t)$, so we will need to estimate those derivatives. This can be done with the finite difference method if we happen to remember the previous states $(x_{t-1}, y_{t-1})$, ..., $(x_{t-3}, y_{t-3})$. Taking the highest order derivative for example, $F'''({x_t})={(y_t-3y_{t-1}+3y_{t-2}-y_{t-3})}/{h^3}$. It is thus apparent that the policy would not be possibly represented if we are Markovian or don't remember sufficiently many prior states. 

This representation of policy is what models should learn through imitation. However, they should not know the polynomial structure a priori. Given a sufficient number of demonstrations with different combinations of polynomial coefficients, models are expected to discover this rule by themselves. This experiment is a minimum viable prototype for general \nm{} decision-making. It can be easily extended to higher-order and higher-dimensional state sequences. 

\paragraph*{Setting}
We employ multi-layer perception (MLP) for this experiment. Demonstrations can be generated by rejection sampling. We constrain the demonstration trajectories to the $(x,y)\in (-1,1)\times(-1,1)$ area, and randomly select $y$ and $y'$ at $x=-1$ and $x=1$. Curves with third-order coefficients less than 1 are rejected. Otherwise, the models may be confused in learning the cubic characteristics. 

Non-Markovian dependency and latent energy-based policy are two prominent features of the proposed model. To test the causal role of non-Markovianness, we experiment with context length $\{1, 2, 4, 6\}$. Context length refers to how many prior states the policy is conditioned on. When it is 1, the policy is Markovian. From our analysis above, we know that context length 4 should be the ground truth, which helps categorize context lengths 2 and 6 into insufficient and excessive expressivity. With these four context lengths, we also train Behavior Cloning (BC) models as the control group. In a deterministic environment, there should not be a difference between BC and \ac{bco}, as the latter basically employs inverse dynamics to recover action labels. For our model, this simple transition can either be learned or implanted. Empirically, we don't notice a significant difference. 

Performance is evaluated both qualitatively and quantitatively. As a 2D planning task, a visualization of the planned curves says a thousand words. In our experiment, we take $h=0.1$, so the planned paths are rather discretized. We use mean squared error to fit a cubic polynomial and use the residual error as a metric. When calculating the residual error, we exclude those with a third-order coefficient is less than 0.5. Actually, the acceptance rate itself is also a viable metric. It is the number of accepted trajectories divided by the total number of testing trajectories. It is complementary to the residual error because it directly measures the understanding of cubic polynomials. 

{\begin{figure}[t]
\begin{center}
\centerline{\includegraphics[trim=210pt 80pt 180pt 70pt, clip, width=\textwidth]{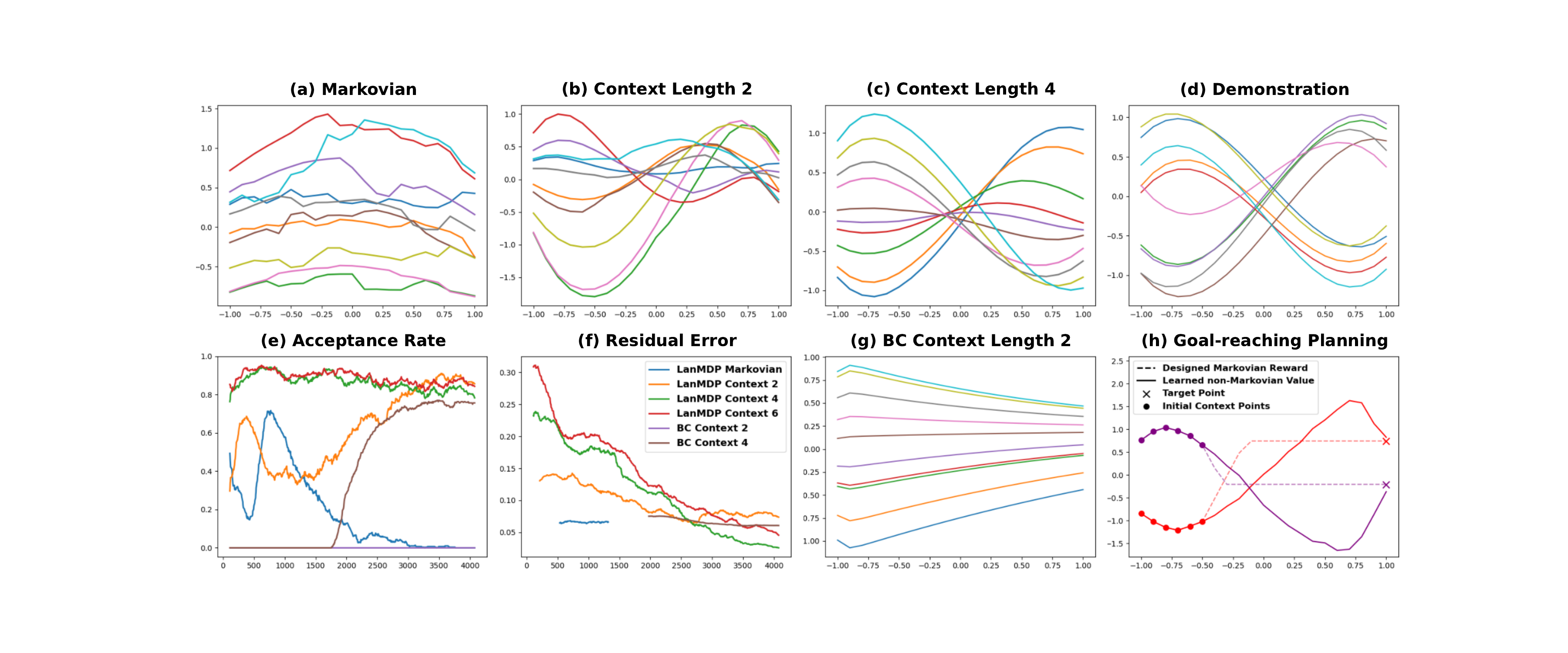}}
\caption{\textbf{Results for cubic curve generation}. (a-c) show curves generated at training step 3000 with context lengths 1, 2, 4. Starting points are randomly selected, and all following are sampled from the policy model. Only models with context length 4 learn the cubic characteristic. (d) shows curves from demonstrations. (e) and (f) present the smoothed acceptance rate and fitting residual of trajectories from policies with context lengths 1, 2, 4, 6. The x-axis is the training steps. (e)(f) are better to be viewed together because residual errors will only be calculated if the acceptance rate is above a threshold. For context length 1, the acceptance rate is always zero for BC, so it is not plotted here. (g) shows curves planned by BC with context length 2. It can be compared with (b). Interestingly, \ac{lanmdp} with context length 2 demonstrates certain cubic characteristics when trained sufficiently long, while the BC counterpart only plans straight lines. (h) is the result of goal-reaching planning, where the dashed line comes from a hand-designed Markov reward, the solid line from the trained \ac{lanmdp}. }

\label{fig:curve}
\end{center}
\vspace{-20pt}
\end{figure}

\paragraph*{Results}
\cref{fig:curve}(a-c) show paths generated with \ac{lanmdp} after training for 3000 steps. They have context lengths 1, 2, 4 respectively. Compared with demonstrations in~\cref{fig:curve}(d), only paths from the policy with context length 4 exhibit cubic characteristics. The Markovian policy totally fails this task. But it still generates curves, rather than straight lines from  Markovian BC (see~\cref{fig:curve_supp}). The policy with context length 2 can plan cubic-like curves at times. But some of its generated paths are very different from demonstrations. To investigate this interesting phenomenon, we plot the training curves in~\cref{fig:curve}(e)(f). While \ac{lanmdp} policies with sufficient and excessive expressivity achieve high acceptance rates at the very beginning of the training, policies with Markovian and insufficient expressivity struggle to generate expected curves at the same time. Remarkably, as training goes by, the policy with context length 2, which can only approximate the ground-truth action in the first order, gradually improves in acceptance rate and residual error. This observation is consistent with~\cref{fig:curve}(b). 

Continuing our investigation, we plot curves generated by its BC counterparts in~\cref{fig:curve}(g) but only see straight lines like the Markovian BC. Therefore, we conjecture that the \ac{lanmdp} policy with length context 2 leverages its energy-based multi-modality to capture the uncertainty induced by marginalizing part of the necessary contexts. The second-order error in Taylor expansion is possibly remedied by this, especially after long-run training. The Markovian \ac{lanmdp} policy, however, fails to unlock such potential because it cannot even figure out the first-order derivative. 

There are some other note-worthy observations. (i) Excessive expressivity does not impair performance, it just requires more training. As shown in ~\cref{fig:curve}(e)(f), at the end of training, \ac{lanmdp} policies with context length 6 perform as well as ones with context length 4. This demonstrates \ac{lanmdp}'s potential in inducing proper state abstraction from sequential contexts. \ac{td} learning, however, has been shown to be incapable of such abstraction in a prior work \citep{ferrer2020solving}. (ii) BC policies with sufficient contexts do not perform as well as \ac{lanmdp}, as shown in~\cref{fig:curve}(e)(f). We conjecture that this might be attributed to the larger compounding error in BC. To shield the influence of compounding errors, we design an experiment where we measure the residual error of the next state after filling the historical contexts in the learned \ac{lanmdp} context 4 and BC context 4 with expert states, rather than sampled states. The errors are both around 0.0004 for LanMDP and BC, closing the gap in Fig. 2f. The implication seems to be \ac{lanmdp} is more robust to compounding errors than BC.

To verify our analysis in \cref{sec:analysis}, we visualize the non-Markovian value function defined in \cref{thm:uv} in~\cref{fig:v_landscape}. The value increases monotonically when the policy generates the cubic curve step by step. In an animation we included on the project homepage\footnote{\href{https://sites.google.com/view/non-markovian-decision-making}{https://sites.google.com/view/non-markovian-decision-making}}, we further show that the action sampling at each state yields the highest value in reachable next states.

At last, we study repurposing the learned sequence model for goal-reaching. This is inspired by a surprising phenomenon, over-imitation, from psychology. Over-imitation occurs when imitators copy actions unnecessary for goal-reaching. In a seminal study~\citep{horner2005causal}, 3- to 4-year-old children and young chimpanzees were presented with a puzzle box containing a hidden treat. An experimenter demonstrated a goal-reaching sequence with both causally necessary and unnecessary actions. When the box was opaque, both chimpanzees and children tended to copy all actions. However, when a transparent box was used such that the causal mechanisms became apparent, chimpanzees omitted unnecessary actions, while human children imitated them. As shown in~\cref{fig:curve}(h), planning with the learned \nm{} value indeed leads to casually unnecessary states, consistent with the demonstrations. Planning with designed Markov rewards produces causally shortest paths.

\begin{figure}[t]
\begin{center}
\vspace{-20pt}
\centerline{\includegraphics[width=1.\columnwidth]{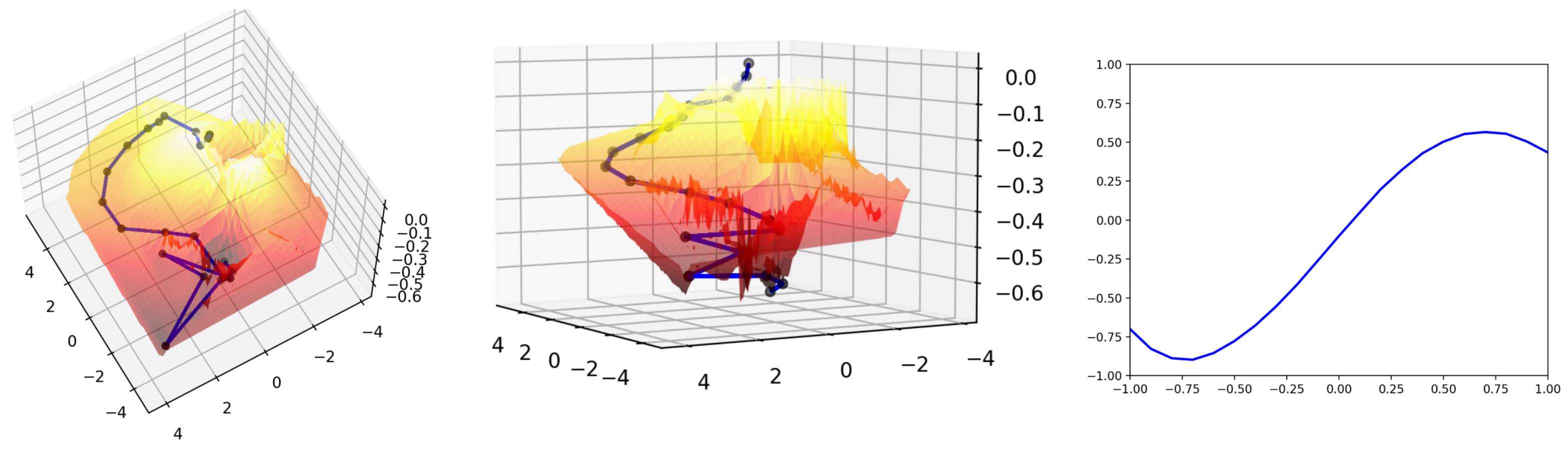}}
\caption{\textbf{Mapping a generated curve to a trajectory in the value landscape.}  We train a neural network to approximate the non-Markovian value function constructed with the learned policy and transition following ~\cref{thm}, and then visualize the landscape by projecting all history-augmented states to a 2D space with a top view (left) and a front view (middle). Starting from a random initial state, decisions are sequentially made according to the learned policy, leaving a curve in the original state space (right) and a trajectory on the value landscape. It is evident that the non-Markovian value increases monotonically along the trajectory.}
\label{fig:v_landscape}
\end{center}
\vspace{-25pt}
\end{figure}

\subsection{Mujoco control tasks}
We also report the empirical results of our model and baseline models on MuJoCo control tasks: Cartpole-v1, Reacher-v2, Swimmer-v3, Hopper-v2 and Walker2d-v2. We train an expert for each task using PPO~\citep{schulman2017proximal}. They are then used to generate 10 trajectories for each task as demonstrations. Actions are deleted in the state-only setting. 

\paragraph*{Setting} We conduct a comparative analysis of \ac{lanmdp} against several established imitation learning baselines including BC~\citep{ross2010efficient}, BCO~\citep{torabi2018behavioral}, GAIL~\citep{ho2016generative},  GAIFO~\citep{torabi2018generative}, and OPOLO~\citep{zhu2020off}. Note that BC and GAIL have access to action labels, positioning them as the control group. The experimental group includes state-only methods such as \ac{lanmdp}, BCO, GAIFO, and OPOLO. The expert is the idealized baseline.  For all tasks, we adopt the MLP architecture for both transition and policy. The input and output dimensions are adapted to the state and action spaces in different tasks, and so are short-run sampling steps. Sequential contexts are extracted from stored episodic memory. The number of neurons in the input and hidden layer in the policy MLP varies according to the context length. We use replay buffers to store the self-interaction experiences for training the transition model offline. See~\cref{appx:mujoco} for detailed information on network architectures and hyper-parameters.

\begin{figure}[t]
\begin{center}
\vspace{-20pt}
\centerline{\includegraphics[width=1.25\columnwidth]{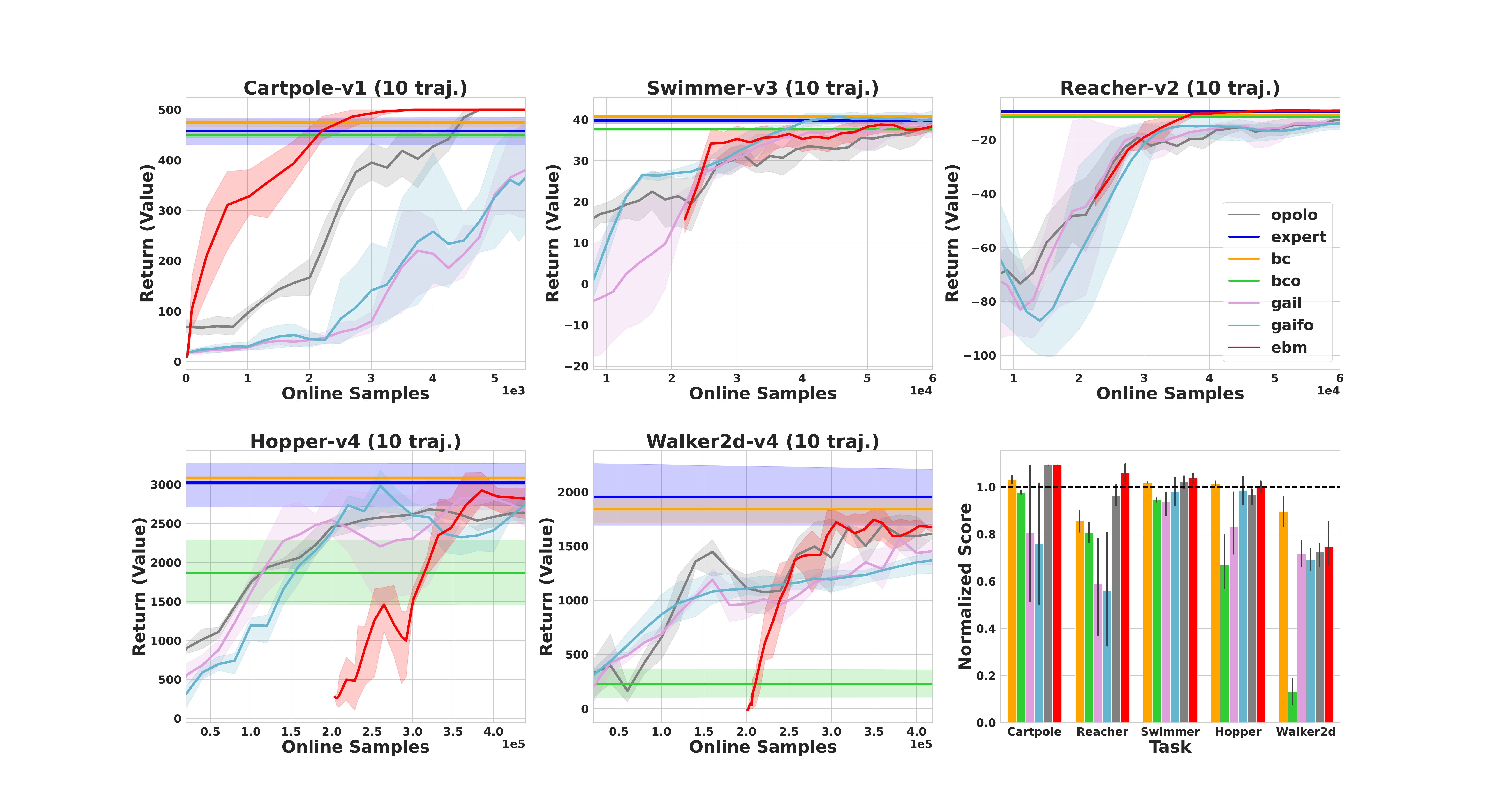}}
\vspace{-10pt}
\caption{\textbf{Results in MuJoCo.} for our \ac{lanmdp} (red), BC (orange), BCO (green), GAIL (purple), GAIFO (cyan), OPOLO (gray), expert (blue). The learning curves are obtained by averaging progress over 5 seeds. We only plot curves for interactive learning methods. The scores of all other methods are plotted as horizontal lines. \ac{lanmdp} does not have performance scores in the first $K$ steps because this data is collected with random policy to fill the replay buffer, which is then used to train the transition model. $K=0$ for Cartpole, $2e4$ for Reacher and Swimmer, $2e5$ for Hopper and Walker2d. We include these steps for fair comparisons. \ac{lanmdp} outperforms existing state-only methods and matches BC, the best-performing state-action counterpart. The bar plot presents the scores from the best-performing policy during the training process, averaged across 5 seeds and normalized with the expert mean. The score in Reacher is offset by a constant before the division of the expert mean to align with the positive scores in all other tasks. The expert mean is plotted as a horizontal line. Our model clearly stands out in state-only methods, while matching and even outperforming those with action labels. Its scores only lag behind the expert mean in the most complex task. Better viewed in color.}
\label{fig:mujoco}
\end{center}
\vspace{-25pt}
\end{figure}


\paragraph*{Results} Results for context length 1 are illustrated through learning curves and a bar plot in~\cref{fig:mujoco}. These learning curves are the average progress across 5 seeds. Scores in the bar plot are normalized relative to the expert score. Our model demonstrates significantly steeper learning curves compared to the state-only GAIFO baselines, especially in Cartpole and Walker2d. This illustrates the remarkable data efficiency of model-based methods. Additionally, \ac{lanmdp} consistently matches or surpasses the performance of BC and GAIL, despite the latter having access to action labels. In comparison to the expert, \ac{lanmdp} only lags behind in the most complex Walker2d task. However, it still maintains a noticeable margin over other state-only baselines. 

Results for longer context lengths, \ie the \nm{} setting, are reported in~\cref{tab:mujoco}, in which the highest return across the training process is listed. Originally invented for studying differentiable dynamics, MuJoCo offers state features that are inherently Markovian. Though a \ac{mdp} is sufficiently expressive, learning a more generalized \ac{nmdp} does not impair the performance. Sometimes it can even improve a little bit. Due to the limit of time, the maximum context length is only 3. Within the investigated regime, our result is consistent with that reported by~\citet{janner2021offline}. We leave the experiments with longer memory and more sophisticated neural networks to future research. }

\begin{table}[t]
\caption{\textbf{Comparison between Markovian and non-Markovian policy in MuJoCo control task.} Context length is the number of prior sequential states that the policy depends on, with the current one included. Recall that these MuJoCo tasks are inherently Markovian, thanks to highly specified state features. Nevertheless, non-Markovian policies perform on par with Markovian ones and BC, despite having higher expressivity than sufficient. The best and the second-best results are highlighted. Results are averaged over 5 random seeds.}

\begin{center}
\begin{small}
\begin{sc}
\begin{tabular}{lcccr}
\toprule
 Task      & context 3 & context 2 & context 1 & BC\\
\midrule
CartPole & \textbf{500.00}\textbf{$\pm$0.00} & \textbf{500.00}\textbf{$\pm$0.00} & \textbf{500.00}\textbf{$\pm$0.00} & 474.80$\pm$18.87 \\
Reacher & -10.91$\pm$0.73 & -9.70$\pm$0.64 & -9.00$\pm$0.87 & \textbf{-8.76}\textbf{$\pm$0.12} \\
Swimmer & \textbf{42.67}\textbf{$\pm$4.66} & \textbf{43.52}\textbf{$\pm$4.31} & 41.22$\pm$2.67 & 38.64$\pm$1.76 \\
Hopper & 3051.16$\pm$111.78 & 3053.91$\pm$176.5 & 3045.27$\pm$240.45 & \textbf{3083.32}\textbf{$\pm$156.61} \\
Walker2d & 1703.02$\pm$228.86 & \textbf{1811.77}\textbf{$\pm$369.54} & 1753.46$\pm$193.69 & \textbf{1839.94}\textbf{$\pm$376.87} \\
\bottomrule
\end{tabular}
\end{sc}
\end{small}
\end{center}
\label{tab:mujoco}
\vspace{-10pt}
\end{table}

\begin{table}[t]
\caption{Computational overheads for posterior sampling, importance sampling, and gradient descent (in seconds) in one training step. MLP($n$,$m$) means that we implement the policy model as an MLP with $m$ layers and $n$ hidden neurons each layer. Results are averaged over 10 epochs. The number of MCMC steps is set to 10, 50 respectively. Replacing posterior sampling with importance sampling improves training efficiency.}

\begin{center}
\begin{small}
\begin{sc}
\begin{tabular}{lccccc}
\toprule
 Task     & $a$ dim & $s$ dim & Architecture & 10 steps & 50 steps\\
\midrule
Reacher & 2 & 11 & MLP(150;4) & 0.0108/0.0076/0.0014 & 0.0480/0.0350/0.0014  \\
Swimmer & 2 & 8 & MLP(150;4) & 0.0100/0.0071/0.0014 & 0.0463/0.0340/0.0014  \\
Hopper & 3 & 12 & MLP(512;4) & 0.0268/0.0170/0.0074 & 0.1403/0.0836/0.0073  \\
Walker2d & 6 & 18 & MLP(512;4) & 0.0282/0.0184/0.0077 & 0.1487/0.0899/0.0076  \\
\bottomrule
\end{tabular}
\end{sc}
\end{small}
\end{center}
\label{tab:overhead}
\vspace{-10pt}
\end{table}

\cref{tab:overhead} is a study of the computational overhead for the sampling techniques involved. The short-run MCMC for posterior inference takes longer than a single step of gradient descent. Replacing it with the proposed importance sampling improves training efficiency by a large margin.

\section{Discussion}\label{sec:diss}

\paragraph*{Related work in imitation learning} Earliest works in imitation learning utilized BC~\citep{hayes1994robot,amit2002learning}. When the training data is limited, temporal drifting in trajectories~\citep{atkeson1997robot, argall2009survey} may occur, which led to the development of \ac{irl}~\cite{ng2000algorithms, abbeel2004apprenticeship, ratliff2006maximum, ziebart2008maximum,finn2016connection, ho2016generative}. In recent years, the availability of abundant sequence/video data is not the primary concern, but rather the difficulty in obtaining action labels. There has since been increasing attention in \ac{ilfo}~\cite{kidambi2021mobile, liu2018imitation, torabi2018generative, zhu2020off,sun2019provably}, a setting similar to ours. Distinguished from existing \ac{ilfo} solutions, our model probabilistically describes the entire trajectory. In particular, the energy-based model~\citep{zhu1998filters,lecun2006tutorial} in the latent policy space~\citep{pang2020learning} has been relatively unexplored. Additionally, the capability for model-based planning is also a novel contribution. 

\paragraph*{Limitation and potential impact} The proposed model factorizes the joint distribution of state-action sequences into a time-invariant causal transition and a latent policy modulated by sequential contexts. While this model requires sampling methods, and can be non-negligible for higher-dimensional actions, it is worth noting that action quantization, as employed in transformer-based models~\citep{janner2021offline,chen2021decision}, has the potential to reduce the computation overhead. In our experiments, a measure of the diversity of behavior is omitted, similar to other works in the literature of reinforcement learning. However, it deserves further investigation since multi-modal density matching is a crucial metric in generative modeling. Importantly, our training objective and analysis are independent of specific modeling and sampling techniques, as long as the state transition remains time-invariant. Given the ability of neural networks to learn approximate invariance through data augmentation~\citep{he2020momentum,chen2020simple,laskin2020reinforcement,sinha2022s4rl}, we anticipate that our work will inspire novel training and inference techniques for monolithic sequential decision-making models~\citep{janner2021offline,chen2021decision,janner2022planning,ajay2023conditional}.

\paragraph*{Implications in neuroscience and psychology} The proposed latent model is an amenable framework for studying the emergent patterns in the mirror neurons~\citep{cook2014mirror, heyes2022happened}, echoing recent studies in grid cells and place cells~\citep{gao2018learning, raju2022space}. When the latent action is interpreted as an internal intention, the inference process is a manifestation of Theory of Mind (ToM)~\citep{baker2009action}. The phenomenon of over-imitation~\citep{horner2005causal,lyons2007hidden, tomasello2019becoming} can also be relevant. As shown in~\cref{sec:expr}, although the proposed model learns a causal transition and hence understands causality, when repurposed for goal-reaching tasks, the learned \nm{} value can result in ``unnecessary'' state visitation. It would be interesting to explore if over-imitation is simply an overfitting due to excessive expressivity in sequence models. 

\section{Conclusion}\label{sec:conclude}

In this study, we explore deep generative modeling of state-only sequences in non-Markovian domains. We propose a model, \ac{lanmdp}, in which the policy functions as an energy-based prior within the latent space of the state transition generator. This model learns by EM-style maximum likelihood estimation. Additionally, we demonstrate the existence of a decision-making problem inherent in such probabilistic inference, providing a fresh perspective on maximum entropy reinforcement learning. To showcase the importance of non-Markovian dependency and evaluate the effectiveness of our proposed model, we introduce a specific experiment called cubic curve planning. Our empirical results also demonstrate the robust performance of \ac{lanmdp} across the MuJoCo suite.

\paragraph{Acknowledgment}
AQ, QL and SZ are supported by the National Key R\&D Program of China (2021ZD0150200). FG contributed to this work during his PhD study at UCLA. SX is supported by a Teaching Assistantship from UCLA CS. The authors thank Prof. Ying Nian Wu at UCLA, Baoxiong Jia and Xiaojian Ma at BIGAI for their useful discussion. The authors would also like to thank the reviewers for their valuable feedback. 

\bibliographystyle{unsrtnat}
\bibliography{reference}
\appendix

\setcounter{equation}{0}
\renewcommand{\theequation}{\arabic{equation}}
\setcounter{theorem}{0}
\renewcommand{\thefigure}{A\arabic{figure}}
\setcounter{figure}{0}

\newpage

\section{Learning and Sampling}
\label{appx:method}

\subsection{Deep generative modelling}
A complete trajectory is denoted by 
\begin{equation}
    \zeta=\{s_0,a_0,s_1,a_1,\cdots,a_{T-1},s_T\}, 
\end{equation}
where $T$ is the maximum length of all observed trajectories. The joint distribution of
state and action sequences can be factorized according to the causal assumptions in nMDP:
\begin{equation}\label{eq_supp:nmdp}
    \begin{aligned}
    p_\theta(\zeta) &=  p(s_0)p_{\alpha}(a_0|s_0)p_{\beta}(s_1|s_0,a_0)\cdots p_{\alpha}(a_{T-1}|s_{0:T-1})p_{\beta}(s_T|s_{T-1},a_{T-1})\\
    &=p(s_0)\prod\nolimits_{t=0}^{T-1}p_{\alpha}(a_t|s_{0:t})p_{\beta}(s_{t+1}|s_t,a_t),
    \end{aligned}
\end{equation}
where $p_{\alpha}(a_{t}|s_{0:t-1})$ is the policy model with parameter $\alpha$, $p_{\beta}(s_t|s_{t-1},a_{t-1})$ is the transition model with parameter $\beta$, both of which are parameterized with neural networks, $\theta=(\alpha,\beta)$. $p(s_0)$ is the initial state distribution, which can be sampled as a black box. 

The density families of policy and transition are consistent with the conventional setting of \ac{irl}~\citep{ziebart2008maximum}. The transition describes the predictable change in state space, which is often possible to express the random variable $s_{t+1}$ as a deterministic variable $s_{t+1}=g_\beta(s_t,a_t,\epsilon)$, where $\epsilon$ is an auxiliary variable with independent marginal $p(\epsilon)$, and $g_\beta(.)$ is some vector-valued function parameterized by $\beta$.  The policy accounts for bounded rationality as a Boltzmann distribution with state-action value as the unnormalized energy:
\begin{equation}\label{eq_supp:lebm}
    p_{\alpha}(a_{t}|s_{0:t})=\frac{1}{Z(\alpha, s_{0:t})}\exp{(f_{\alpha}(a_t;s_{0:t}))},
\end{equation}
where $f_{\alpha}(a_t;s_{0:t})$ is the negative energy, $Z(\alpha, s_{0:t})=\int\displaystyle \exp(f_{\alpha}(a_t;s_{0:t}))da_t$ is the normalizing constant given the history $s_{0:t}$.

Since we can only observe state sequences, the aforementioned generative model can be understood as a sequential variant of LEBM~\citep{pang2020learning}, where the transition serves as the generator and the policy is a history-conditioned latent prior. The marginal distribution of state sequences and the posterior distribution of action sequences are:
\begin{equation}\label{eq_supp:prior_post}
        p_{\theta}(s_{0:T}) = \int p_\theta(s_{0:T},a_{0:T-1}) da_{0:T-1},\quad
        p_{\theta}(a_{0:T-1}|s_{0:T}) =\frac{p_\theta(s_{0:T},a_{0:T-1})}{p_{\theta}(s_{0:T}) }.  
\end{equation}

\subsection{Maximum likelihood learning} \label{sec:MLE}

We need to estimate $\theta=(\alpha,\beta)$. Suppose we observe training examples: $\{\xi^i\},i=1,2,\cdots,n,\quad \xi^i=[s^i_{0}, s^i_{1},... , s^i_{T}]$. The log-likelihood function is:
\begin{equation}\label{eq_supp:mle}
    L(\theta) = \sum\nolimits^n_{i=1}\log p_\theta(\xi^i).
\end{equation}
Denote posterior distribution of action sequence $p_\theta(a_{0:T-1}|s_{0:T})$ as $p_\theta(A|S)$ for convenience where $A$ and $S$ means the complete action and state sequences in a trajectory. The gradient of log-likelihood is:
\begin{equation}\label{eq_supp:mle_traj}
    \begin{aligned}
        \nabla_\theta \log p_\theta(\xi) &= \nabla_\theta \log p_\theta(s_0, s_1, \cdots, s_T) \\
        &= \mathbb{E}_{p_\theta(A|S)}[\nabla_\theta \log p_\theta(s_0, s_1, \cdots, s_T)] \\
        &= \mathbb{E}_{p_\theta(A|S)}[\nabla_\theta \log p_\theta(s_0, s_1, \cdots, s_T)] + \mathbb{E}_{p_\theta(A|S)}[\nabla_\theta \log p_\theta(A|S)] \\
        &= \mathbb{E}_{p_\theta(A|S)}[\nabla_\theta \log p_\theta(s_0,a_0,s_1,a_1,\cdots,a_{T-1},s_T)] \\
        &= \mathbb{E}_{p_\theta(A|S)}[\nabla_\theta \log p(s_0)p_{\alpha}(a_0|s_0)\cdots p_{\alpha}(a_{T-1}|s_{0:T-1})p_{\beta}(s_T|s_{T-1},a_{T-1})] \\
        &= \mathbb{E}_{p_\theta(A|S)}[\nabla_\theta\sum\nolimits^{T-1}_{t=0}(\log p_{\alpha}(a_t|s_{0:t})+\log p_{\beta}(s_{t+1}|s_{t},a_{t}))] \\
        &= \mathbb{E}_{p_\theta(A|S)}[\sum\nolimits^{T-1}_{t=0}(\underbrace{\nabla_\alpha\log p_{\alpha}(a_t|s_{0:t})}_{\text{policy/prior}}+ \underbrace{\nabla_\beta\log p_{\beta}(s_{t+1}|s_{t},a_{t})}_{\text{transition}})],
    \end{aligned}
\end{equation}

where the third equation is because of a simple identity $\mathrm{E}_{\pi_\theta(a)}\left[\nabla_\theta \log \pi_\theta(a)\right]=0$ for any probability distribution $\pi_\theta(a)$. Applying this simple identiy, we also have:
\begin{equation}
    \begin{aligned}
        0&=\mathbb{E}_{p_{\alpha}(a_t|s_{0:t})}\left[\nabla_\alpha \log p_{\alpha}(a_t|s_{0:t})\right]\\&=\mathbb{E}_{p_{\alpha}(a_t|s_{0:t})}\left[\nabla_\alpha f_\alpha(a_t;s_{0:t})-\nabla_\alpha \log Z(\alpha,s_{0:t})\right] \\
        &=\mathbb{E}_{p_{\alpha}(a_t|s_{0:t})}\left[\nabla_\alpha f_\alpha(a_t;s_{0:t})\right]-\nabla_\alpha \log Z(\alpha,s_{0:t}).
    \end{aligned}
\end{equation}

Due to the normalizing constant $Z(\alpha, s_{0:t})$ in the energy-based prior $p_\alpha$, the gradient for the policy term involves both posterior and prior samples:
\begin{equation}\label{eq_supp:mle_policy}
    \begin{aligned}
        \delta_{\alpha, t}(S) &= \mathbb{E}_{p_\theta(A|S)}\left[\nabla_{\alpha}\log p_{\alpha}(a_t|s_{0:t})\right] \\
        &= \mathbb{E}_{p_\theta(A|S)}\left[\nabla_{\alpha} f_{\alpha}(a_t;s_{0:t}) - \nabla_\alpha \log Z(\alpha,s_{0:t}) \right] \\
        & = \mathbb{E}_{p_\theta(A|S)}\left[\nabla_\alpha f_{\alpha}(a_t;s_{0:t}) - \mathbb{E}_{p_{\alpha}(a_t|s_{0:t})}\left[\nabla f_{\alpha}(a_t;s_{0:t})\right]\right] \\
        &= \mathbb{E}_{p_\theta(A|S)}\left[\nabla_{\alpha} f_{\alpha}(a_t;s_{0:t})\right] - \mathbb{E}_{p_{\alpha}({a}_t| s_{0:t})}\left[\nabla_{\alpha} f_{\alpha}({a}_t; s_{0:t})\right],
    \end{aligned}
\end{equation}
where $\delta_{\alpha, t}(S)$ denotes the surrogate loss of policy term for time step $t$. Intuition can be gained from the perspective of adversarial training~\citep{finn2016connection,ho2016generative}: On one hand, the model utilizes action samples from the posterior $p_\theta(A|S)$ as pseudo-labels to supervise the unnormalized prior at each step $p_{\alpha}({a}_t| s_{0:t})$. On the other hand, it discourages action samples directly sampled from the prior. The model converges when prior samples and posterior samples are indistinguishable. 

To ensure the transition model's validity, it needs to be grounded in real-world dynamics when jointly learned with the policy. Otherwise, the agent would be purely hallucinating based on the demonstrations. Throughout the training process, we allow the agent to periodically collect self-interaction data with $p_{\alpha}({a}_t|{s}_{0:t})$ and mix transition data from two sources with weight $w_\beta$:
\begin{equation}\label{eq_supp:mle_transition}
    \delta_{\beta, t}(S) = w_\beta\mathbb{E}_{p_\theta(A|S)}\left[\nabla_\beta\log p_{\beta}(s_{t+1}|s_{t},a_{t})\right] + (1-w_\beta)\mathbb{E}_{p_{\alpha}({a}_t|{s}_{0:t}), Tr}\left[\nabla_\beta\log p_{\beta}({s}_{t+1}|{s}_{t},{a}_{t})\right].
\end{equation}

\subsection{General transition model} \label{sec:transition}
We need to compute the gradient of $\beta$ for the logarithm of transition probability in Equation~\ref{eq_supp:mle_transition}, and as we will see in section~\ref{sec:sampling}, we also need to compute the gradient of the action during sampling actions. The reparameterization~\citep{kingma2013auto} is useful since it can be used to rewrite an expectation w.r.t $p_{\beta}(s_{t+1}|s_{t},a_{t})$  such that the Monte Carlo estimate of the expectation is differentiable, so we use delta function $\delta(.)$ to rewrite probability as an expectation:
\begin{equation}
    \begin{aligned}
        p_{\beta}(s_{t+1}|s_{t},a_{t}) &= \int\delta(s_{t+1}-s_{t+1}')p_\beta(s_{t+1}'|s_{t},a_{t})ds_{t+1}' \\
        &= \int\delta(s_{t+1}-g_\beta(s_t,a_t,\epsilon))p(\epsilon)d\epsilon.
    \end{aligned}
\end{equation}
Taking advantage of the properties of $\delta(.)$: 
\begin{equation}
    \int f(x)\delta(x)dx=f(0), \quad \delta(f(x))=\Sigma_n\frac{1}{|f'(x_n)|}\delta(x-x_n),
\end{equation}
where $f$ is differentiable and have isolated zeros, which is $x_n$, we can rewrite the transition probability as:
\begin{equation}
    \begin{aligned}
        p_{\beta}(s_{t+1}|s_{t},a_{t}) & = \int\sum_n\frac{1}{|\frac{\partial}{\partial \epsilon}g_\beta(s_t,a_t,\epsilon)|_{\epsilon=\epsilon_n}}\delta(\epsilon-\epsilon_n)p(\epsilon)d\epsilon \\
        & = \sum_n\frac{p(\epsilon_n)}{|\frac{\partial}{\partial \epsilon}g_\beta(s_t,a_t,\epsilon)|_{\epsilon=\epsilon_n}},
    \end{aligned}
\end{equation}
where $\epsilon_n$ is the zero of $s_{t+1}=g_\beta(s_t,a_t,\epsilon)$. Therefore, if we have a differentiable simulator $\nabla_{a_t}\log p_{\beta}(s_{t+1}|s_{t},a_{t})$ and the analytical form of $p(\epsilon)$ , then gradient of both $a_t$ and $\beta$ for $\log p_{\beta}(s_{t+1}|s_{t},a_{t})$ can be computed. 

The simplest situation is: 
\begin{equation}
    s_{t+1}=g_\beta(s_t,a_t)+\epsilon, \epsilon\sim p(\epsilon)=\mathcal{N}(0,\sigma^2).
\end{equation}

In this case, there is only one zero $\epsilon^*$ for the transition function, $s_{t+1}=g_\beta(s_t,a_t)+\epsilon^*$, and the gradient of log probability is:
\begin{equation}
    \begin{aligned}
        \nabla\log p_{\beta}(s_{t+1}|s_{t},a_{t}) & = \nabla\log \frac{p(\epsilon^*)}{|\frac{\partial}{\partial \epsilon}(g_\beta(s_t,a_t)+\epsilon)|_{\epsilon=\epsilon^*}} \\
        & = \nabla\log p(\epsilon^*) \\
        & = \nabla\log p(s_{t+1}-g_\beta(s_t,a_t)) \\
        & = \frac{1}{\sigma^2}(s_{t+1}-g_\beta(s_t,a_t))\nabla g_\beta(s_t,a_t).
    \end{aligned}
\end{equation}

\subsection{Prior and posterior sampling}\label{sec:sampling_supp}
The maximum likelihood estimation requires samples from the prior and the posterior distributions of actions. It would not be a problem if the action space is quantized. However, since we target general latent action learning, we proceed to introduce sampling techniques for continuous actions. 

When sampling from a continuous energy space, short-run Langevin dynamics~\citep{nijkamp2019learning} can be an efficient choice. For a target distribution $\pi(a)$, Langevin dynamics iterates $a_{k+1}=a_k+s\nabla_{a_k}\log{\pi(a_k)}+\sqrt{2s}\epsilon_k$, where $k$ indexes the number of iteration, $s$ is a small step size, and $\epsilon_k$ is the Gaussian white noise. $\pi(a)$ can be either the prior $p_{\alpha}({a}_t| s_{0:t})$ or the posterior $p_\theta(A|S)$. One property of Langevin dynamics that is particularly amenable for EBM is that we can get rid of the normalizing constant. So for each $t$ the iterative update for prior samples is
\begin{equation}\label{eq_supp:prior_ld}
    {a}_{t, k+1}= a_{t,k}+s\nabla_{ a_{t,k}} f_{\alpha}( a_{t, k}; s_{0:t})+\sqrt{2s}\epsilon_k.
\end{equation}

Given a state sequence $ s_{0:T}$ from the demonstrations, the posterior samples at each time step $a_{t}$ come from the conditional distribution $p(a_t| s_{0:T})$. Notice that with Markov transition, we can derive 
\begin{equation}\label{eq_supp:post_fact}
    p_{\theta}(a_{0:T-1}|s_{0:T}) =\prod\nolimits^{T-1}_{t=0}p_\theta(a_t|s_{0:T})=\prod\nolimits^{T-1}_{t=0}p_\theta(a_t|s_{0:t+1}).
\end{equation}
The point is, given the previous and the next subsequent state, the posterior can be sampled at each step independently. So the posterior iterative update is 
\begin{equation}\label{eq_supp:post_ld}
    \begin{aligned}
        {a}_{t, k+1} &= a_{t,k}+s\nabla_{a_{t,k}} \log p_\theta(a_{t,k}|s_{0:t+1}) +\sqrt{2s}\epsilon_k \\
        &= a_{t,k}+s\nabla_{a_{t,k}} \log p_\theta(s_{0:t},a_{t,k},s_{t+1}) +\sqrt{2s}\epsilon_k \\
        &= a_{t,k}+s\nabla_{a_{t,k}} ( \underbrace{\log p_{\alpha}(a_{t,k}| s_{0:t})}_{\text{policy/prior}} + \underbrace{\log p_{\beta}( s_{t+1}| s_{t},a_{t})}_{\text{transition}})+\sqrt{2s}\epsilon_k.
    \end{aligned}
\end{equation}
Intuitively, action samples at each step are updated with the energy of all subsequent actions and a single-step forward by back-propagation. However, while gradients from the transition term are analogous to the inverse dynamics in \ac{bco}~\citep{torabi2018behavioral}, it may lead to poor training performance due to non-injectiveness in forward dynamics~\citep{zhu2020off}. 

We develop an alternative posterior sampling method with importance sampling to overcome this challenge. Leveraging the learned transition, we have 
\begin{equation}\label{eq_supp:importance}
    p_\theta(a_t| s_{0:t+1}) = \frac{p_\beta( s_{t+1}| s_{t},a_t)}{\mathbb{E}_{p_\alpha( a_t| s_{0:t})}\left[p_\beta( s_{t+1}| s_{t}, a_t)\right]}p_\alpha(a_t| s_{0:t}).
\end{equation}
Let $c(a_t;  s_{0:t+1})={\mathbb{E}_{p_\alpha( a_t| s_{0:t})}\left[p_\beta( s_{t+1}| s_{t}, a_t)\right]}$, posterior sampling from $p_{\theta}(a_{0:T-1}| s_{0:T})$ can be realized by adjusting importance weights of independent samples from the prior $p_\alpha(a_t| s_{0:t})$, in which the estimation of weights involves another prior sampling. In this way, we avoid back-propagating through non-injective dynamics and save some computation overhead in~\cref{eq_supp:post_ld}. 

To train the policy, \cref{eq_supp:mle_policy} can now be rewritten as
\begin{equation}\label{eq_supp:mle_importance}
    \delta_{\alpha, t}(S) = \mathbb{E}_{p_\alpha(a_t| s_{0:t})}\left[\frac{p_\beta( s_{t+1}| s_{t},a_t)}{c(a_t;  s_{0:t+1})}\nabla_{\alpha} f_{\alpha}(a_t; s_{0:t})\right]-\mathbb{E}_{p_\theta(a_t| s_{0:t})}\left[\nabla_{\alpha} f_{\alpha}(a_t; s_{0:t})\right].
\end{equation}

\subsection{Algorithm}

The learning and sampling algorithms with \ac{mcmc} and with importance sampling for posterior sampling are described in~\cref{alg:lanmdp_mcmc} and~\cref{alg:lanmdp_imp}.

\begin{algorithm}[tb]
   \caption{LanMDP without importance sampling}
   \label{alg:lanmdp_mcmc}
\begin{algorithmic}
   \STATE {\bfseries Input:} Learning iterations $N$, learning rate for energy-based policy $\eta_\alpha$, learning rate for transition model $\eta_\beta$, initial parameters $\theta_0=(\alpha_0,\beta_0)$, expert demonstrations $\{s_{0:H}\}$, context length $L$, batch size $m$, number of prior and posterior sampling steps $\{K_0,K_1\}$, prior and posterior sampling step sizes $\{s_0,s_1\}$.
   \STATE {\bfseries Output:} $\theta_N=(\alpha_N,\beta_N)$.
   \STATE Reorganize $\{s_{0:H}\}$ to to state sequenec segments ${(s_{t-L+1},\cdots,s_{t+1})}$ with length $L+1$.
   \STATE Use energy-based policy with $\alpha_0$ collect transitions to fill in the replay buffer.
   \STATE Use transitions in replay buffer to pre-train transition model $\beta_0$.
   
   \FOR{$t=0$ {\bfseries to} $N-1$}
   \STATE \textbf{Demo sampling} Sample observed examples ${(s_{t-L+1},\cdots,s_{t+1})}_{i=1}^m$.
   \STATE \textbf{Posterior sampling}: Sample $\{a_{t}\}_{i=1}^m$ using \cref{eq_supp:post_ld} with $K_1$ iterations and stepsize $s_1$.
   \STATE \textbf{Prior sampling}: Sample $\{\hat{a}_{t}\}_{i=1}^m$ using \cref{eq_supp:prior_ld} with $K_0$ iterations and stepsize $s_0$.
   \STATE \textbf{Policy learning}: Update $\alpha_t$ to $\alpha_{t+1}$ by \cref{eq_supp:mle_policy} with learning rate $\eta_\alpha$.
   \STATE \textbf{Transition learning}: Update replay buffer with trajectories from current policy model $\alpha_{t+1}$, then update $\beta_t$ to $\beta_{t+1}$ by \cref{eq_supp:mle_transition} with learning rate $\eta_\beta$.
   \ENDFOR
\end{algorithmic}
\end{algorithm}

\begin{algorithm}[tb]
   \caption{LanMDP with importance sampling}
   \label{alg:lanmdp_imp}
\begin{algorithmic}
   \STATE {\bfseries Input:} Learning iterations $N$, learning rate for energy-based policy $\eta_\alpha$, learning rate for transition model $\eta_\beta$, initial parameters $\theta_0=(\alpha_0,\beta_0)$, expert demonstrations $\{s_{0:H}\}$, context length $L$, batch size $m$, number of prior sampling steps $K$ and step sizes $s$.
   \STATE {\bfseries Output:} $\theta_N=(\alpha_N,\beta_N)$.
   \STATE Reorganize $\{s_{0:H}\}$ to to state sequenec segments ${(s_{t-L+1},\cdots,s_{t+1})}$ with length $L+1$.
   \STATE Use energy-based policy with $\alpha_0$ collect transitions to fill in the replay buffer.
   \STATE Use transitions in replay buffer to pre-train transition model $\beta_0$.
   
   \FOR{$t=0$ {\bfseries to} $N-1$}
   \STATE \textbf{Demo sampling} Sample observed examples ${(s_{t-L+1},\cdots,s_{t+1})}_{i=1}^m$.
   \STATE \textbf{Prior sampling}: Sample $\{\hat{a}_{t}\}_{i=1}^m$ using \cref{eq_supp:prior_ld} with $K_0$ iterations and stepsize $s_0$.
   \STATE \textbf{Policy learning}: Update $\alpha_t$ to $\alpha_{t+1}$ by \cref{eq_supp:mle_importance} with learning rate $\eta_\alpha$.
   \STATE \textbf{Transition learning}: Update replay buffer with trajectories from current policy model $\alpha_{t+1}$, then update $\beta_t$ to $\beta_{t+1}$ by  \cref{eq_supp:mle_transition} with learning rate $\eta_\beta$.
   \ENDFOR
\end{algorithmic}
\end{algorithm}

\clearpage
\section{A Decision-making Problem in MLE}
\label{appx:analysis}

Let the ground-truth distribution of demonstrations be $p^\ast(s_{0:T})$, and the learned marginal distributions of state sequences be $p_{\theta}(s_{0:T})$.~\cref{eq_supp:mle} is an empirical estimate of 
\begin{equation}\label{eq_supp:mle_gt}
    \E_{p^{\ast}(s_{0:T})}[\log p_{\theta}(s_{0:T})] = \E_{p^{\ast}(s_{0})}\left[\log p^{\ast}(s_0)+\E_{p^{\ast}(s_{1:T}|s_0)}[\log p_{\theta}(s_{1:T}|s_{0})]\right].
\end{equation}
We can show that a sequential decision-making problem can be constructed to maximize the same objective. To start off, suppose the \ac{mle} yields the maximum, we will have $p_{\theta^{\ast}}=p^{\ast}$. 

Define $V^\ast(s_{0})\defeq\E_{p^{\ast}(s_{1:T}|s_0)}[\log p^\ast(s_{1:T}|s_{0})]$, we can generalize it to have a \emph{$V$ function}
\begin{equation}\label{eq_supp:v}
    V^\ast(s_{0:t})\defeq\E_{p^{\ast}(s_{t+1:T}|s_{0:t})}[\log p^\ast(s_{t+1:T}|s_{0:t})], 
\end{equation}
which comes with a Bellman optimality equation:
\begin{equation}\label{eq_supp:v_bellman}
    \begin{aligned}
    V^{\ast}(s_{0:t})
    &=\E_{p^{\ast}(s_{t+1}|s_{0:t})}\left[r(s_{t+1},s_{0:t})+V^\ast(s_{0:t+1})\right],
    \end{aligned}
\end{equation}
with $r(s_{t+1},s_{0:t})\defeq\log p^{\ast}(s_{t+1}|s_{0:t})=\log \int p_{\alpha^\ast}(a_{t}|s_{0:t})p_{\beta^\ast}(s_{t+1}|s_{t}, a_t)da_t$, $V^\ast(s_{0:T})\defeq0$. It is worth noting that the $r$ defined above involves the optimal policy, which may not be known a priori. We can resolve this by replacing it with $r_\alpha$ for an arbitrary policy $p_{\alpha}(a_{t}|s_{0:t})$. All Bellman identities and updates should still hold. Anyways, involving the current policy in the reward function should not appear to be too odd given the popularity of maximum entropy RL~\citep{ziebart2010modeling,levine2018reinforcement}. 

The entailed Bellman update, \emph{value iteration}, for arbitrary $V$ and $\alpha$ is
\begin{equation}\label{eq_supp:v_update}
    \begin{aligned}
    V(s_{0:t})&=\E_{p^{\ast}(s_{t+1}|s_{0:t})}\left[r_\alpha(s_{0:t}, s_{t+1})+V(s_{0:t+1})\right].
    \end{aligned}
\end{equation}
We then define $r(s_{t+1}, a_{t}, s_{0:t})\defeq r(s_{t+1},s_{0:t})+\log p_{\alpha^\ast}(a_t|s_{0:t})$ to construct a $Q$ function:
\begin{equation}\label{eq_supp:q}
    \begin{aligned}
    Q^\ast(a_t;s_{0:t})&\defeq\E_{p^{\ast}(s_{t+1}|s_{0:t})}\left[r(s_{t+1}, a_t,s_{0:t})+V^\ast(s_{0:t+1})\right], 
    \end{aligned}
\end{equation}
which entails a Bellman update, \emph{Q backup}, for arbitrary $\alpha$, $Q$ and $V$
\begin{equation}\label{eq_supp:q_update}
    \begin{aligned}
    Q(a_t;s_{0:t})&=\E_{p^{\ast}(s_{t+1}|s_{0:t})}\left[r_\alpha(s_{0:t}, a_t, s_{t+1})+V(s_{0:t+1})\right]. 
    \end{aligned}
\end{equation}

Also note that the $V$ and $Q$ in identities~\cref{eq_supp:v_update} and~~\cref{eq_supp:q_update} respectively are not necessarily associated with the policy $p_{\alpha}(a_{t}|s_{0:t})$. Slightly overloading the notations, we use $Q^\alpha$, $V^\alpha$ to denote the expected returns from policy $p_{\alpha}(a_{t}|s_{0:t})$. 

By now, we finish the construction of atomic algebraic components and  move on to check if the relations between them align with the algebraic structure of a sequential decision-making problem~\citep{sutton2018reinforcement}. 

We first prove the construction above is valid at optimality. 
\begin{lemma}\label{thm:opt_pi}
    When $f_\alpha(a_t;s_{0:t})=Q^\ast(a_t;s_{0:t})-V^\ast(s_{0:t})$, $p_{\alpha}(a_{t}|s_{0:t})$ is the optimal policy.
\end{lemma}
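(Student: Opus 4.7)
The plan is to verify the claim directly by substituting $f_\alpha = Q^\ast - V^\ast$ into the Boltzmann form of the policy and showing it reduces to the ground-truth policy $p_{\alpha^\ast}$, which under the construction already established is optimal. The core algebraic identity to prove is
\begin{equation*}
    Q^\ast(a_t; s_{0:t}) - V^\ast(s_{0:t}) = \log p_{\alpha^\ast}(a_t | s_{0:t}).
\end{equation*}
To obtain this, I would start from the definition $Q^\ast(a_t;s_{0:t}) = \E_{p^\ast(s_{t+1}|s_{0:t})}[r(s_{t+1}, a_t, s_{0:t}) + V^\ast(s_{0:t+1})]$ together with the augmented reward $r(s_{t+1}, a_t, s_{0:t}) = r(s_{t+1}, s_{0:t}) + \log p_{\alpha^\ast}(a_t | s_{0:t})$. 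Since $\log p_{\alpha^\ast}(a_t | s_{0:t})$ has no $s_{t+1}$ dependence, it pulls out of the expectation, leaving a residual expectation of $r(s_{t+1}, s_{0:t}) + V^\ast(s_{0:t+1})$ which equals $V^\ast(s_{0:t})$ by the Bellman optimality equation~\eqref{eq_supp:v_bellman}. Subtracting $V^\ast(s_{0:t})$ gives the identity.

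Next I would plug this identity into the Boltzmann definition~\eqref{eq_supp:lebm}:
\begin{equation*}
    p_{\alpha}(a_t | s_{0:t}) = \frac{\exp(Q^\ast(a_t; s_{0:t}) - V^\ast(s_{0:t}))}{Z(\alpha, s_{0:t})} = \frac{p_{\alpha^\ast}(a_t | s_{0:t})}{Z(\alpha, s_{0:t})}.
\end{equation*}
Since $p_{\alpha^\ast}(\cdot | s_{0:t})$ is already a normalized density in $a_t$, the partition function evaluates to $Z(\alpha, s_{0:t}) = 1$, so $p_\alpha(a_t|s_{0:t}) = p_{\alpha^\ast}(a_t|s_{0:t})$ pointwise.

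Finally, I would close the argument by invoking the framing around~\eqref{eq_supp:mle_gt}: at the MLE maximum $p_{\theta^\ast}(s_{0:T}) = p^\ast(s_{0:T})$, so the ground-truth policy $p_{\alpha^\ast}$ is, by construction, the policy that maximizes $\E_{p^\ast(s_{0:T})}[\log p_\theta(s_{0:T})]$ and hence optimal for the constructed decision problem. Combined with the equality $p_\alpha = p_{\alpha^\ast}$ above, this establishes the lemma. The main subtlety to watch is ensuring consistency between the definition of $Q^\ast$ (which uses the marginal $p^\ast(s_{t+1}|s_{0:t})$ rather than $p_{\beta^\ast}(s_{t+1}|s_t, a_t)$) and the use of the augmented reward, so that the $\log p_{\alpha^\ast}$ term factors out cleanly and the Bellman identity for $V^\ast$ kicks in exactly. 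Once that bookkeeping is done, the lemma is essentially an unrolling of definitions.
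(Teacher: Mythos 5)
Your proposal is correct and follows essentially the same route as the paper: expand $Q^\ast$ with the augmented reward, pull the $s_{t+1}$-independent term $\log p_{\alpha^\ast}(a_t|s_{0:t})$ out of the expectation, apply the Bellman identity to collapse the remainder to $V^\ast(s_{0:t})$, and conclude $Q^\ast - V^\ast = \log p_{\alpha^\ast}$ so the Boltzmann policy with this energy is exactly $p_{\alpha^\ast}$ with unit partition function. Your explicit closing step tying $p_{\alpha^\ast}$ to the MLE optimum just spells out what the paper leaves implicit in its ``lies in the hypothesis space'' remark.
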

\vspace{-1.5em}
\begin{proof}
Note that the construction gives us
\begin{equation}\label{eq_supp:q_v}
    \begin{aligned}
        Q^\ast(a_t;s_{0:t})&=\E_{p^{\ast}(s_{t+1}|s_{0:t})}\left[r(s_{t+1},s_{0:t})+\log p_{\alpha^\ast}(a_t|s_{0:t})+V^\ast(s_{0:t+1})\right]\\
        &=\log p_{\alpha^\ast}(a_t|s_{0:t})+\E_{p^{\ast}(s_{t+1}|s_{0:t})}\left[r(s_{t+1},s_{0:t})+V^\ast(s_{0:t+1})\right]\\
        &=\log p_{\alpha^\ast}(a_t|s_{0:t})+V^\ast(s_{0:t}).
    \end{aligned}
\end{equation}
Obviously, $Q^\ast(a_t;s_{0:t})$ lies in the hypothesis space of $f_\alpha(a_t;s_{0:t})$. 
\end{proof}
\cref{thm:opt_pi} indicates that we need to either parametrize $f_\alpha(a_t;s_{0:t})$ or $Q(a_t;s_{0:t})$. 

While $Q^{\alpha}$ and $V^{\alpha}$ are constructed from the optimality, the derived $Q^{\alpha}$ and $V^{\alpha}$ measure the performance of an interactive agent when it executes with the policy $p_{\alpha}(a_t|s_{0:t})$. They should be consistent with each other. 
\begin{lemma}\label{thm:v_q}
    $V^{{\alpha}}(s_{0:t})$ and $\E_{p_{\alpha}(a_t|s_{0:t})}[Q^{{\alpha}}(a_t;s_{0:t})]$ yield the same optimal policy $p_{\alpha^\ast}(a_{t}|s_{0:t})$. 
\end{lemma}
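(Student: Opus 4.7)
The plan is to first use the algebraic identity relating $V^{\alpha}(s_{0:t})$ and $\E_{p_\alpha(a_t|s_{0:t})}[Q^{\alpha}(a_t; s_{0:t})]$ stated immediately above Lemma 2, and then invoke Lemma 1 together with the variational representation of log-partition functions to identify both objectives' common optimum as $p_{\alpha^\ast}$.

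First, I would rewrite the stated identity $\E_{p_\alpha}[Q^{\alpha}(a_t; s_{0:t})] = V^{\alpha}(s_{0:t}) - \sum_{k=t}^{T-1}\E_{p^\ast(s_{t+1:k}|s_{0:t})}[\mathcal{H}_\alpha(a_k|s_{0:k})]$ (where the $k=t$ term reduces to $\mathcal{H}_\alpha(a_t|s_{0:t})$ with a trivial outer expectation) as saying the two objectives differ only by a trajectory-wise conditional entropy functional of $\alpha$. This isolates the gap between them and shows that any argument equating their argmax must exploit structure beyond the naive first-order comparison.

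Next, I would invoke Lemma 1 to characterize the MLE optimum as $f_{\alpha^\ast}(a_t; s_{0:t}) = Q^\ast(a_t; s_{0:t}) - V^\ast(s_{0:t})$, which forces the EBM normalizer $Z(\alpha^\ast, s_{0:t}) = 1$ and is equivalent to the soft Bellman identity $V^\ast(s_{0:t}) = \log \int \exp(Q^\ast(a_t; s_{0:t})) da_t$. By the Gibbs variational principle, this log-partition equals $\max_{p} \{\E_{p}[Q^\ast(a_t; s_{0:t})] + \mathcal{H}(p)\}$, with the unique maximizer $p = p_{\alpha^\ast}$. Applied recursively along future time steps, this certifies that $p_{\alpha^\ast}$ simultaneously maximizes $\E_{p_\alpha}[Q^\alpha]$ augmented with the trajectory-wise entropy bonuses, which is exactly $V^\alpha$ by the identity above. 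Conversely, any MLE maximizer of $V^\alpha$ must admit Lemma 1's parameterization and therefore equals $p_{\alpha^\ast}$, closing the equivalence.

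The hard part will be rigorously handling the simultaneous $\alpha$-dependence of $Q^\alpha$, $V^\alpha$, and the trajectory-wise entropy terms, since naively $\nabla_\alpha \E_{p_\alpha}[Q^\alpha] = \nabla_\alpha V^\alpha - \nabla_\alpha \sum_k \E[\mathcal{H}_\alpha]$ does not visibly vanish at $\alpha^\ast$ together with $\nabla_\alpha V^\alpha$. I plan to circumvent this by leveraging the soft Bellman structure from Lemma 1 to decouple the coupled problem into per-step log-partition maximizations, each a convex variational problem whose unique soft-max solution matches $p_{\alpha^\ast}$. A soft policy iteration argument then closes the loop: the update $p_{\alpha'} \propto \exp(Q^\alpha)$ monotonically improves $V^\alpha$ and contracts to the same fixed point as value iteration on $V^\alpha$, so the two objectives share both the optimizer and the optimization trajectory.
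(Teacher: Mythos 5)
Your skeleton matches the paper's: the proof in Appendix B derives exactly the identity you start from,
\begin{equation*}
\E_{p_{\alpha}(a_t|s_{0:t})}[Q^{{\alpha}}(a_t;s_{0:t})]=V^{{\alpha}}(s_{0:t})-\mathcal{H}_{\alpha}(a_t|s_{0:t})-\sum\nolimits_{k=t+1}^{T-1}\E_{p^{\ast}(s_{t+1:k}|s_{0:t})}[\mathcal{H}_{\alpha}(a_{k}|s_{0:k})],
\end{equation*}
by unrolling the Bellman backup, and then disposes of the entropy gap in a single sentence: ``as an energy-based policy, $p_\alpha$'s entropy is inherently maximized,'' citing Jaynes. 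Your replacement of that sentence with Lemma 1 plus the Gibbs variational principle ($\log\int\exp(Q^\ast)\,da=\max_p\{\E_p[Q^\ast]+\mathcal{H}(p)\}$, uniquely attained at $p\propto\exp(Q^\ast)$) is a genuinely sharper formalization of what the paper gestures at, and it connects cleanly to the paper's own Lemma 3 (soft policy iteration converging to $p_{Q^\ast}$).

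That said, the circumvention you propose in your last paragraph does not actually close the gap you correctly identify. The per-step log-partition maximization certifies that $p_{\alpha^\ast}$ optimizes $\E_p[Q^\ast]+\mathcal{H}(p)$, i.e., the \emph{entropy-augmented} objective, which by the identity is $V^\alpha$ — so your recursion re-proves that $p_{\alpha^\ast}$ maximizes $V^\alpha$ (the content of the first half of Theorem 1), not that it maximizes $\E_{p_\alpha}[Q^\alpha]$ alone. The step from ``optimizes $V^\alpha$'' to ``optimizes $V^\alpha$ minus the entropy terms'' is precisely where the $\alpha$-dependence of the entropy bonuses bites, and your plan ends up leaning on the same implicit claim as the paper: that within the energy-based hypothesis class the entropy terms do not shift the argmax. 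To be fair, the paper's own proof makes this leap with no more justification than you do, so you have reproduced its argument faithfully, soft spot included; but if you want your version to be airtight you still need an explicit argument (e.g., that at $\alpha^\ast$ the stationarity of the entropy functional over the restricted class $\{p_\alpha\propto\exp f_\alpha\}$ follows from $f_{\alpha^\ast}=Q^\ast-V^\ast$), rather than the soft-policy-iteration contraction, which only sees the augmented objective.
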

\vspace{-1.5em}
\begin{proof}
    \begin{equation}\label{eq_supp:v_pi}
        \begin{aligned}
        &\E_{p_{\alpha}(a_t|s_{0:t})}[Q^{{\alpha}}(a_t;s_{0:t})]
        \defeq\E_{p_{\alpha}(a_t|s_{0:t})}\left[\E_{p^{\ast}(s_{t+1}|s_{0:t})}\left[r(s_{t+1}, a_t,s_{0:t})+V^{{\alpha}}(s_{0:t+1})\right]\right]\\
        =&\E_{p_{\alpha}(a_t|s_{0:t})}\left[\E_{p^{\ast}(s_{t+1}|s_{0:t})}\left[\log p_{\alpha}(a_t|s_{0:t})+r(s_{t+1},s_{0:t})+V^{{\alpha}}(s_{0:t+1})\right]\right]\\
        = &\E_{p^{\ast}(s_{t+1}|s_{0:t})}\left[r(s_{t+1},s_{0:t})-\mathcal{H}_{\alpha}(a_t|s_{0:t})+V^{{\alpha}}(s_{0:t+1})\right]\\
        =&V^{{\alpha}}(s_{0:t})-\mathcal{H}_{\alpha}(a_t|s_{0:t})-\sum\nolimits_{k=t+1}^{T-1}\E_{p^{\ast}(s_{t+1:k}|s_{0:t})}[\mathcal{H}_{\alpha}(a_{k}|s_{0:k})],
        \end{aligned}
    \end{equation}
    where the last line is derived by recursively applying the Bellman equation in the line above until $s_{0:T}$ and then applying backup with~\cref{eq_supp:v_update}. As an energy-based policy, $p_{\alpha}(a_t|s_{0:t})$'s entropy is inherently maximized~\citep{jaynes1957information}. Therefore, within the hypothesis space, $p_{\alpha^\ast}(a_t|s_{0:t})$ that optimizes $V^\alpha(s_{0:t})$ also leads to optimal expected return $\E_{p_{\alpha}(a_t|s_{0:t})}[Q^{{\alpha}}(a_t;s_{0:t})]$. 
\end{proof}
If we parametrize the policy as $p_{\alpha}(a_{t}|s_{0:t})\propto\exp (Q^\alpha(a_t;s_{0:t}))$, the logarithmic normalizing constant $\log Z^{{\alpha_k}}(s_{0:t})$ will be the \emph{soft V function} in maximum entropy RL~\citep{fox2016taming,haarnoja2017reinforcement,haarnoja2018soft}
\begin{equation}\label{eq_supp:v_q}
    \begin{aligned}
    V_{soft}^\alpha(s_{0:t})\defeq\log \int \exp (Q^\alpha(a_t;s_{0:t}))da_t, 
    \end{aligned}
\end{equation}
even if the reward function is defined differently. 
We can further show that Bellman identities and backup updates above can entail RL algorithms that achieve optimality of the decision-making objective $V^\alpha$, including \emph{soft policy iteration}~\citep{ziebart2010modeling}
\begin{equation}\label{eq_supp:soft_pi}
    p_{\alpha_{k+1}}(a_t|s_{0:t}) \leftarrow \frac{\exp(Q^{{\alpha_k}}(a_t;s_{0:t}))}{Z^{{\alpha_k}}(s_{0:t})}, \forall s_{0:t}, k \in [0, 1, ... M];
\end{equation}
and \emph{soft Q iteration}~\citep{fox2016taming}
\begin{equation}\label{eq_supp:soft_q}
    \begin{aligned}
        Q^{{\alpha_{k+1}}}(a_t;s_{0:t})&\leftarrow\E_{p^{\ast}(s_{t+1}|s_{0:t})}\left[r_\alpha(s_{0:t}, a_t, s_{t+1})+V_{soft}^{{\alpha_k}}(s_{0:t+1})\right], \forall s_{0:t}, a_t, \\
        V_{soft}^{{\alpha_{k+1}}}(s_{0:t}) &\leftarrow \log \int \exp(Q^{{\alpha_k}}(a;s_{0:t}))da, \forall s_{0:t}, k \in [0, 1, ... M].\\
    \end{aligned}
\end{equation}
\begin{lemma}\label{thm:soft_updt}
    If $p^{\ast}(s_{t+1}|s_{0:t})$ is accessible and $p_{\beta^{\ast}}(s_{t+1} | s_{t}, a_{t} )$ is known, {soft policy iteration} and {soft Q learning} both converge to $p_{\alpha^\ast}(a_t|s_{0:t})=p_{Q^\ast}(a_t|s_{0:t})\propto\exp(Q^\ast(a_t;s_{0:t}))$ under certain conditions.
\end{lemma}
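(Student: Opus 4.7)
The plan is to mirror the standard convergence arguments for soft policy iteration and soft Q iteration from maximum entropy reinforcement learning, carefully adapted to the non-Markovian setting where the effective ``state'' is the context $s_{0:t}$. Both arguments rely on the Bellman identities established in Lemmas~\ref{thm:opt_pi} and~\ref{thm:v_q} and exploit the finite horizon $T$ with the terminal condition $V^\alpha(s_{0:T})\defeq 0$, which gives a naturally bounded backward induction.

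For soft policy iteration, I would first prove a \emph{policy improvement} lemma: given $p_{\alpha_{k+1}}(a_t|s_{0:t}) \propto \exp(Q^{\alpha_k}(a_t;s_{0:t}))$, one has $V^{\alpha_{k+1}}(s_{0:t})\ge V^{\alpha_k}(s_{0:t})$ pointwise for every context. The key observation is that, by the variational characterization of the softmax, $p_{\alpha_{k+1}}(\cdot|s_{0:t})$ is the unique maximizer of $\E_{p(a|s_{0:t})}[Q^{\alpha_k}(a;s_{0:t})] + \mathcal{H}(p(\cdot|s_{0:t}))$ among all distributions over $a_t$, whose value from~\cref{eq_supp:v_pi} lower-bounds $V^{\alpha_{k+1}}$ via a one-step backward substitution. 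Iterating this over the remaining $T-t$ timesteps by backward recursion through~\cref{eq_supp:v_update} yields the monotone improvement. Since the reward $r_\alpha$ is bounded on the support (this is one of the ``certain conditions'', together with sufficient expressivity of $f_\alpha$ to realize arbitrary softmax forms), the sequence $\{V^{\alpha_k}\}$ is bounded and non-decreasing, hence convergent. At the limit, the fixed-point equation together with Lemma~\ref{thm:opt_pi} identifies the policy as $p_{\alpha^\ast}\propto\exp(Q^\ast)$.

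For soft Q iteration, I would recognize the composite update~\cref{eq_supp:soft_q} as application of the soft Bellman operator
\begin{equation*}
    (\mathcal{T}Q)(a_t;s_{0:t}) \defeq \E_{p^\ast(s_{t+1}|s_{0:t})}\Bigl[r_\alpha(s_{0:t},s_{t+1}) + \log\int\exp(Q(a;s_{0:t+1}))\,da\Bigr].
\end{equation*}
The standard log-sum-exp Lipschitz bound gives $\|\mathcal{T}Q_1-\mathcal{T}Q_2\|_\infty \le \|Q_1-Q_2\|_\infty$, and in the finite-horizon setting with $V(s_{0:T})\defeq 0$ the backward sweep terminates after exactly $T$ passes, so no discount factor is needed: the iteration reaches the unique solution to the soft Bellman optimality equation in $T$ steps. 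That solution is $Q^\ast$ by construction, and the induced policy $p_{Q^\ast}\propto\exp(Q^\ast)$ coincides with $p_{\alpha^\ast}$ by Lemma~\ref{thm:opt_pi}.

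The main obstacle I anticipate is bookkeeping the non-Markovian dependency: the ``state space'' is effectively $S^+$, and every Bellman argument must be read contextually, with $p^\ast(s_{t+1}|s_{0:t})$ playing the role of the transition kernel. Additionally, since $r_\alpha$ itself depends on the current policy through its $\log p_\alpha$ component, one must be careful to separate the evaluation step (which holds $\alpha$ fixed and runs the backup in~\cref{eq_supp:q_update} to convergence for that $\alpha$) from the improvement step; this is analogous to the treatment in~\citet{haarnoja2018soft} but requires redoing the monotonicity accounting with sequential contexts rather than single states. The ``certain conditions'' should be made explicit as: bounded $\log p^\ast(s_{t+1}|s_{0:t})$ on the data support, finite horizon $T$, and a policy parametrization rich enough to realize softmax policies over $Q$.
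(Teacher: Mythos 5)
Your proposal is correct in substance, but it is worth noting that the paper does not actually prove this lemma: its ``proof'' consists entirely of citations, deferring soft policy iteration to \citet{ziebart2010modeling}, soft Q learning to \citet{fox2016taming}, and the extension to non-Markovian domains to \citet{majeed2018q}. What you have written is essentially the content of those cited arguments, reconstructed and adapted to contexts $s_{0:t}$: the SAC-style policy-improvement step via the Gibbs variational characterization of the softmax, monotone bounded convergence of $\{V^{\alpha_k}\}$, and for soft Q iteration the non-expansiveness of the log-sum-exp backup combined with finite-horizon backward induction from $V(s_{0:T})=0$ (which, as you correctly observe, removes the need for a discount factor). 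Your identification of $p^{\ast}(s_{t+1}|s_{0:t})$ as the effective transition kernel over $S^{+}$ is exactly the Markov-abstraction device that makes the non-Markovian extension go through. So relative to the paper you are doing strictly more work, and the route is the same one the citations encode.

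One soft spot remains in both your account and the paper's: the reward $r_\alpha(s_{t+1},s_{0:t})=\log\int p_{\alpha}(a_t|s_{0:t})p_{\beta^\ast}(s_{t+1}|s_t,a_t)\,da_t$ depends on the policy being optimized, and this dependence is \emph{not} of the additive-entropy form that the standard max-entropy improvement proofs absorb. When you update $\alpha_k\to\alpha_{k+1}$, the reward entering $Q^{\alpha_{k+1}}$ changes, so $V^{\alpha_{k+1}}\ge V^{\alpha_k}$ does not follow from the one-step variational argument alone; you acknowledge this but only gesture at ``separating evaluation from improvement,'' which is where a rigorous version would need an explicit condition (e.g., fixing $r_{\alpha_k}$ during the improvement step and controlling the reward drift, or restricting to the regime near $p_{\theta^\ast}=p^\ast$ where the construction is anchored). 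The paper buries this under ``under certain conditions''; if you want your proof to be complete rather than a faithful expansion of the citations, this is the one step you would need to pin down.
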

\vspace{-1.5em}
\begin{proof}
    See the convergence proof by \citet{ziebart2010modeling} for \emph{soft policy iteration} and the proof by \citet{fox2016taming} for \emph{soft Q learning}. The latter requires Markovian assumption. But under some conditions, it can be extended to \nm{} domains in the same way as proposed by \citet{majeed2018q}.
\end{proof}
\cref{thm:soft_updt} means given $p^{\ast}(s_{t+1}|s_{0:t})$ and $p_{\beta^{\ast}}(s_{t+1} | s_{t}, a_{t} )$, we can recover $p_{\alpha^\ast}$ through reinforcement learning methods, instead of the proposed \ac{mle}. So $p_{\alpha}(a_t|s_{0:t})$ is a viable policy space for the constructed sequential decision-making problem. 

Together,~\cref{thm:opt_pi}, \cref{thm:v_q} and ~\cref{thm:soft_updt} provide constructive proof for a valid sequential decision-making problem that maximizes the same objective of MLE, described by~\cref{thm:uv}.

\begin{theorem}\label{thm:uv}
    Assuming the Markovian transition $p_{\beta^\ast}(s_{t+1}|s_{t}, a_t)$ is known, the ground-truth conditional state distribution $p^{\ast}(s_{t+1}|s_{0:t})$ for demonstration sequences is accessible, we can construct a sequential decision-making problem, based on a reward function $r_\alpha(s_{t+1},s_{0:t})\defeq\log \int p_{\alpha}(a_{t}|s_{0:t})p_{\beta^\ast}(s_{t+1}|s_{t}, a_t)da_t$ for an arbitrary energy-based policy $p_\alpha(a_t|s_{0:t})$. Its objective is 
    \begin{equation*}
        \sum\nolimits_{t=0}^{T}\E_{p^{\ast}(s_{0:t})}[V^{p_\alpha}(s_{0:t})]=\E_{p^\ast(s_{0:T})}\left[\sum\nolimits_{t=0}^{T}\sum\nolimits_{k=t}^{T}r_\alpha(s_{k+1};s_{0:k})\right], 
    \end{equation*}
    where $V^{p_\alpha}(s_{0:t})\defeq E_{p^\ast(s_{t+1:T}|s_{0:t})}[\sum\nolimits_{k=t}^{T}r_\alpha(s_{k+1};s_{0:k})]$ is the value function for $p_\alpha$. This objective yields the same optimal policy as the Maximum Likelihood Estimation $\E_{p^{\ast}(s_{0:T})}[\log p_{\theta}(s_{0:T})]$. 

    If we further define a reward function $r_\alpha(s_{t+1},a_t, s_{0:t})\defeq r_\alpha(s_{t+1},s_{0:t})+\log p_{\alpha}(a_{t}|s_{0:t})$ to construct a $Q$ function for $p_\alpha$
    \begin{equation*}
        Q^{p_\alpha}(a_t;s_{0:t})\defeq\E_{p^\ast(s_{t+1}|s_{0:t})}\left[r_\alpha(s_{t+1}, a_t,s_{0:t})+V^{p_\alpha}(s_{0:t+1})\right]. 
    \end{equation*}
    The expected return of $Q^{p_\alpha}(a_t;s_{0:t})$ forms an alternative objective
    \begin{equation*}
        \E_{p_{\alpha}(a_t|s_{0:t})}[Q^{p_\alpha}(a_t;s_{0:t})] = V^{p_\alpha}(s_{0:t})-\mathcal{H}_{\alpha}(a_t|s_{0:t})-\sum\nolimits_{k=t+1}^{T-1}\E_{p^{\ast}(s_{t+1:k}|s_{0:t})}[\mathcal{H}_{\alpha}(a_{k}|s_{0:k})]
    \end{equation*} 
    that yields the same optimal policy, for which the optimal $Q^{\ast}(a_t;s_{0:t})$ can be the energy function.  

    Only under certain conditions, this sequential decision-making problem is solvable through \nm{} extensions of the maximum entropy reinforcement learning algorithms. 
\end{theorem}

\clearpage
\section{More results on Curve Planning}
\label{appx:curve}

The energy function is parameterized by a small MLP with one hidden layer and $4*L$ hidden neurons, where $L$ is the context length. In short-run Langevin dynamics, the number of samples, the number of sampling steps, and the stepsize are 4, 20 and 1 respectively. We use Adam optimizer with a learning rate 1e-4 and batch size 64. Here we present the complete result in~\cref{fig:curve_supp} with different training steps under context length 1 2 4 6, the acceptance rate and residual error of the testing trajectories, as well as the behavior cloning results. We can see that even with sufficient context, BC performs worse than \ac{lanmdp}. Also, from the result of context length 6 we can see that excessive expressivity does not impair performance, it just requires more training.

\begin{figure}[h]
\begin{center}
\centerline{\includegraphics[trim=150pt 120pt 120pt 130pt, clip, width=\textwidth]{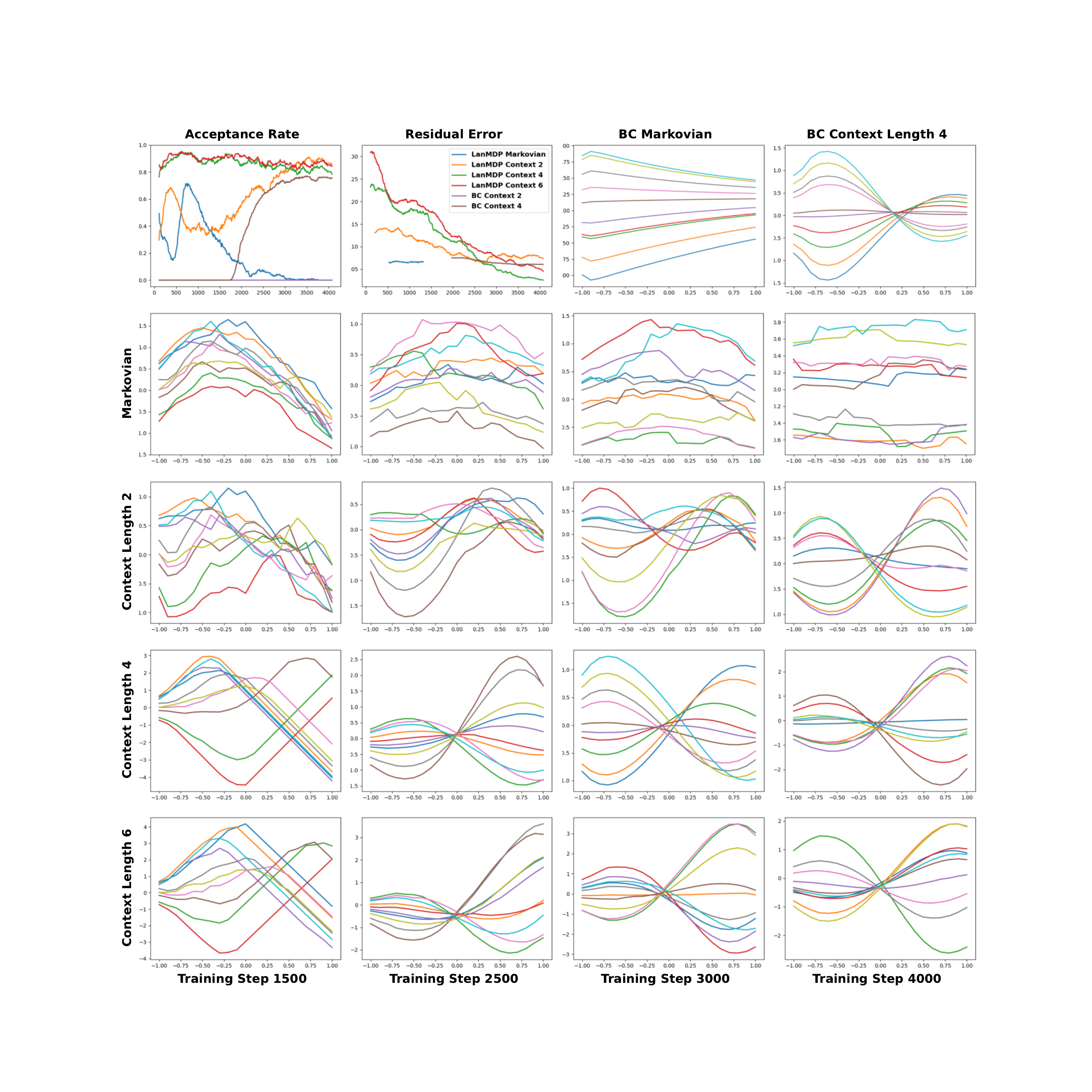}}
\caption{More results for cubic curve generation}

\label{fig:curve_supp}
\end{center}
\vspace{-30pt}
\end{figure}

\section{Implementation Details of MuJoCo Environment}
\label{appx:mujoco}

This section delineates the configurations for the MuJoCo environments utilized in our research. In particular, we employ standard environment horizons of 500 and 50 for Cartpole-v1 and Reacher-v2, respectively. Meanwhile, for Swimmer-v2, Hopper-v2, and Walker2d-v2, we operate within an environment horizon set at 400 as referenced in previous literature~\citep{kidambi2021mobile, kidambi2020morel, kurutach2018model, luo2018algorithmic, nagabandi2018neural, rajeswaran2020game}. Additional specifications are made for Hopper-v2 and Walker2d-v2, where the velocity of the center of mass was integrated into the state parameterization~\citep{kidambi2021mobile, kidambi2020morel, luo2018algorithmic, rajeswaran2020game}. We leverage PPO~\citep{schulman2017proximal} approach to train the expert policy until it reaches (approximately) 450, -10, 40, 3000, 2000 for Cartpole-v1, Reacher-v2, Swimmer-v2, Hopper-v2, Walker2d-v2 respectively. It should be noted that all results disclosed in the experimental section represent averages over five random seeds. Comparative benchmarks include BC~\citep{ross2010efficient}, BCO~\citep{torabi2018behavioral}, GAIL~\citep{ho2016generative}, and GAIFO~\citep{torabi2018generative}. MobILE~\citep{kidambi2021mobile} is a recent method for Markovian model-based imitation from observation. However, we failed to reproduce the expected performance utilizing various sets of demonstrations, so it is prudently omitted from the present displayed result. We specifically point out that BC/GAIL algorithms are privy to expert actions, however, our algorithm is not. We report the mean of the best performance achieved by BC/BCO with five random seeds, even though these peak performances may transpire at varying epochs. For BC, we executed the supervised learning algorithm for 200 iterations. The BCO/GAIL algorithms are run with an equivalent number of online samples as LanMDP for a fair comparison. All benchmarking is performed using a single 3090Ti GPU and implemented using the PyTorch framework. Notably, in our codebase, the modified environments of Hopper-v2 and Walker2d-v2 utilize MobILE's implementation~\citep{kidambi2021mobile}. Referring to the results in the main text, our presentation of normalized results in bar graph form is derived by normalizing each algorithm's performance (mean/standard deviation) against the expert mean. For Reacher-v2, due to the inherently negative rewards, we first add a constant offset of 20 to each algorithm’s performance, thus converting all values to positive before normalizing them against the mean of expert policy.

We parameterize both the policy model and the transition model as MLPs, and the non-linear activation function is Swish and LeakyReLU respectively. We use AdamW to optimize both policy and transition. To stabilize training, we prefer using actions around which the transition model is more certain for computing the expectation over importance-weighted prior distribution in~\cref{eq_supp:mle_importance}. Therefore, we use a model ensemble with two transition models and use the disagreement between these two models to measure the uncertainty of the sampled actions. We implement ~\cref{alg:lanmdp_imp} for all experiments to avoid expensive computation of the gradient for the transition model in posterior sampling. As for better and more effective short-run Langevin sampling, we use a polynomially decaying schedule for the step size as recommended in~\citep{grathwohl2019your}. We also use weakly L2 regularized energy magnitudes and clip gradient steps like~\citep{du2019implicit}, choosing to clip the total amount of change value, i.e. after the gradient and noise have been combined. To realize more delicate decision-making, another trick in Implicit Behavior Clone~\citep{florence2022implicit} is also adopted for the inference/testing stage that we continue running the MCMC chain after the step size reaches the smallest in the polynomial schedule until we get twice as many inference Langevin steps as were used during training. 

Hyper-parameters are listed in~\cref{tab:hp_supp}. Other hyperparameters that are not mentioned here are left as default in PyTorch. Also, note that the Cartpole-v1 task has no parameters for sampling because expectation can be calculated analytically.

\begin{table}[htbp]
\vspace{-10pt}
\caption{Hyper-parameter list of MuJoCo experiments}
\centering
\tiny
\begin{spacing}{1.5}
\begin{tabularx}{\textwidth}{
@{}>{\scriptsize\raggedright\arraybackslash}p{3cm}
@{}>{\scriptsize\centering\arraybackslash}X
@{}>{\scriptsize\centering\arraybackslash}X
@{}>{\scriptsize\centering\arraybackslash}X
@{}>{\scriptsize\centering\arraybackslash}X
@{}>{\scriptsize\centering\arraybackslash}X} 
\hline
Parameter & Cartpole-v1 & Reacher-v2 & Swimmer-v2 & Hopper-v2 & Walker2d-v2 \\
\hline
\multicolumn{6}{@{}>{\raggedright\arraybackslash}p{3cm}}{ \textbf{\scriptsize \noindent Environment Specification}} \\
\hline
Horizon & 500 & 50 & 400 & 400 & 400 \\
\hline
Expert Performance ($\approx$) & 450 & -10 & 40 & 3000 & 2000 \\
\hline
\multicolumn{6}{@{}>{\raggedright\arraybackslash}p{3cm}}{ \textbf{\scriptsize \noindent Transition Model}} \\
\hline
Architecture(hidden;layers)  & MLP(64;4) & MLP(64;4) & MLP(128;4) & MLP(512;4) & MLP(512;4) \\
\hline
Optimizer(LR) & 3e-3 & 3e-3 & 3e-3 & 3e-3 & 3e-3 \\
\hline
Batch Size & 2500 & 20000 & 20000 & 32768 & 32768 \\
\hline
Replay Buffer Size & 2500 & 20000 & 20000 & 200000 & 200000 \\
\hline
\multicolumn{6}{@{}>{\raggedright\arraybackslash}p{5cm}}{ \textbf{\scriptsize \noindent Policy Model (with context length $L$)}} \\
\hline
Architecture(hidden;layers)  & MLP(150*$L$;4) & MLP(150*$L$;4) & MLP(150*$L$;4) & MLP(512*$L$;4) & MLP(512*$L$;4) \\
\hline
Learning rate & 1e-3 & 1e-2 & 1e-2 & 1e-2 & 5e-3 \\
\hline
Batch Size & 2500 & 20000 & 20000 & 32768 & 32768 \\
\hline
Number of test trajectories & 5 & 20 & 20 & 50 & 50 \\
\hline
\multicolumn{6}{@{}>{\raggedright\arraybackslash}p{3cm}}{ \textbf{\scriptsize \noindent Sampling Parameters}} \\
\hline
Number of prior samples & $\backslash$ & 8 & 8 & 8 & 8 \\
\hline
Number of Langevin steps & $\backslash$ & 100 & 100 & 100 & 100 \\
\hline
Langevin initial stepsize & $\backslash$ & 10 & 10 & 10 & 10 \\
\hline
Langevin ending stepsize & $\backslash$ & 1 & 1 & 1 & 1 \\
\hline
\end{tabularx}
\end{spacing}
\label{tab:hp_supp}
\end{table}



\end{document}